\newcommand{\xbf}{\ensuremath{\mathbf{x}}}
\newcommand{\Cbf}{\ensuremath{\mathbf{C}}}
\newcommand{\Dbf}{\ensuremath{\mathbf{D}}}
\newcommand{\Pfrak}{\ensuremath{\mathfrak{P}}}
\newcommand{\Qfrak}{\ensuremath{\mathfrak{Q}}}
\newcommand{\Fcal}{\ensuremath{\mathcal{F}}}
\newcommand{\Scal}{\ensuremath{\mathcal{S}}}
\newcommand{\posterior}{\Qfrak}
\newcommand{\expectation}{\E}
\newcommand{\indicator}[1]{\I\left(#1\right)}
\newcommand{\sign}{\operatorname{sign}}
\newcommand{\argmax}{\operatorname{argmax}}
\newcommand{\E}{\mathbb{E}}
\newcommand{\R}{\mathbb{R}}
\newcommand{\I}{\mathbb{I}}
\newcommand{\PR}{\operatorname{\mathbb{P}}}
\newcommand{\x}{\times}
\newcommand{\leqo}{\preccurlyeq}
\newcommand{\egaldef}{\!\stackrel{def}{=}\!}
\newtheorem{cor}{Corollary}
\newtheorem{lemma}{Lemma}
\newtheorem{theorem}{Theorem}
\newtheorem{propo}{Proposition}
\begin{document}

\lhead{E. Morvant, S. Ko\c co, L. Ralaivola} 
\rhead{PAC-Bayesian Generalization Bounds on the Confusion Matrix}
\rfoot[Technical Report V 4.0]{\thepage} 
\cfoot{} 
\lfoot[\thepage]{Technical Report V 4.0}

\renewcommand{\headrulewidth}{0.4pt}  
\renewcommand{\footrulewidth}{0.4pt}

\title{PAC-Bayesian Generalization Bound on Confusion Matrix for Multi-Class Classification\thanks{This work was supported in part by the french projects VideoSense ANR-09-CORD-026 and DECODA ANR-09-CORD-005-01 of the ANR in part by the IST Programme of the European Community, under the PASCAL2 Network of Excellence, IST-2007-216886. This publication only reflects the authors' views.}}

 \author{
Emilie Morvant
\and  Sokol Ko\c co 
\and Liva Ralaivola 
\and  Aix-Marseille Univ., LIF-QARMA, CNRS, UMR 7279, F-13013, Marseille, France\\ \textit{\{firstname.name\}@lif.univ-mrs.fr}}

\maketitle

\begin{abstract}
In this work, we propose a PAC-Bayes bound for the generalization risk of the Gibbs classifier in the multi-class classification framework.
The novelty of our work is the critical use of the {\em confusion matrix} of a classifier 
as an error measure; this puts our contribution in the line of work aiming at dealing with performance measure that are richer than mere scalar criterion such as the misclassification rate. Thanks to very recent and beautiful results  on matrix concentration inequalities,  we derive two bounds showing that the true confusion risk of the Gibbs classifier is upper-bounded by its empirical risk plus a term depending on the number of training examples in each class.
To the best of our knowledge, this is the first PAC-Bayes bounds based on confusion matrices.
\end{abstract}

%%% Local Variables: 
%%% mode: latex
%%% TeX-master: "main_multibayes"
%%% End: 

\textbf{Keywords:} Machine Learning, PAC-Bayes generalization bounds, Confusion Matrix, Concentration Inequality, Multi-Class Classification

\section{Introduction}
\label{sec:intro}

%\subsection{Background}
The PAC-Bayesian framework, first introduced by \cite{Mcallester99a}, provides an important field of research in learning theory.
It borrows ideas from the philosophy of Bayesian inference and mix them with techniques used in statistical approaches of learning. Given a family of classifiers $\mathcal{F}$, the ingredients of a PAC-Bayesian bound are a {\em prior distribution} $\mathfrak{P}$ over $\mathcal{F}$, a learning sample $S$ and a {\em posterior distribution} $\mathfrak{Q}$ over $\mathcal{F}$. Distribution $\mathfrak{P}$ conveys some prior belief on what are the best classifiers from $\mathcal{F}$ (prior any access to $S$); the classifiers expected to be the most performant for the classification task at hand therefore have the largest weights under $\mathfrak{P}$. The posterior distribution $\mathfrak{Q}$ is learned/adjusted using the information provided by the training set $S$. The essence of PAC-Bayesian   results is to bound the risk of the {\em stochastic} Gibbs classifier associated with $\mathfrak{Q}$ \citep{Catoni2004} ---in order to predict the label of an example ${\bf x}$, the Gibbs classifier first draws a classifier $f$ from $\mathcal{F}$ according to $\mathfrak{Q}$ and then returns $f({\bf x})$ as the predicted label.
%The aim of the learner is to use the additional information from the sample $S$ in order to minimize the error of the Gibbs classifier built from the distribution $\mathfrak{Q}$.

%PAC-Bayes bound haven been used to obtain algorithms based on maximizing the margins \cite{Langford02}, or to obtain algorithms based on the majority vote of the classifiers of $\mathcal{F}$ whose weights are defined by $\mathfrak{Q}$.
%PAC-Bayes bounds have been used to obtain different algorithms depending on the loss function and the classifiers that are used.
When specialized to appropriate function spaces $\mathcal{F}$ and relevant families of prior and posterior distributions,
PAC-Bayes bounds can be used to characterize the error of a few existing classification methods.
%For example,  \cite{Langford02} give a PAC-Bayes bound which depends on the margin of the examples.
%It is based on the same idea as the SVM, that is, the goal of the learner is to compute a distribution $\mathfrak{Q}$ so that it maximizes the margins of the examples.
An example deals with the risk of methods based upon the idea of the majority vote in the case of binary classification. 
We may notice that if $\mathfrak{Q}$ is the posterior distribution, the error of the $\mathfrak{Q}$-weighted majority vote
classifier, which makes a prediction for ${\bf x}$ according to
$\sum_ff({\bf x})\mathfrak{Q}(f)$, is bounded by twice the error of
the Gibbs classifier.  
If the classifiers from $\mathcal{F}$ on which the distribution
$\mathfrak{Q}$ puts a lot of weight are good enough, then the bound on
the risk of the Gibbs classifier can be an informative bound
for the risk of the $\mathfrak{Q}$-weighted majority vote.  With a more elaborated argument, \citet{Langford02}
give a PAC-Bayes bound for Support Vector Machine (SVM) which closely relates the risk of the Gibbs classifier
and that of the corresponding majority vote classifier, and where the margin of the examples enter into play.  In their study,
both the prior and posterior distribution are normal
distributions, with different means and variances.  Empirical results
show that this bound is a good estimator of the
risk of SVMs \citep{Langford2005}.

PAC-Bayes bounds can also be used to derive new supervised learning algorithms.
For example, \citet{Lacasse07} have introduced an elegant bound on the risk of the majority vote, which holds for any space $\mathcal{F}$.
This bound is used to derive an algorithm, namely {\tt MinCq} \citep{MinCq}, which achieves
empirical results on par with state-of-the-art methods.
Some other important results are given in \citep{Catoni2007,Seeger02,Mcallester99b,Langford2001}.

In this paper, we address the multiclass classification problem.
Some related works are therefore multiclass formulations for the SVMs, such as the frameworks of %presented in 
\cite{Weston98}, \cite{Lee04} and \cite{Crammer2002}.
As majority vote methods, we can also cite multiclass adaptations of  {\tt AdaBoost} \cite{Freund1996}, such as the framework proposed by \cite{Mukherjee2010}, {\tt AdaBoost.MH/AdaBoost.MR} algorithms of \cite{Schapire99} and {\tt SAMME} algorithm by \cite{Zhu09}.

%\subsection{Main contribution}

%In this context, we propose a new PAC-Bayesian generalization bound for the Gibbs classifier. % in the multiclass framework.

The originality of our work is that we consider the {\em confusion matrix} of the Gibbs classifier as an error measure.
We believe that in the multiclass framework, it is more relevant to consider the confusion matrix as the error measure than the mere misclassification error, which corresponds to the probability  for some classifier $h$ to err on ${\bf x}$.
The information as to what is the probability for an instance of class $p$ to be classified into class $q$ (with $p\neq q)$ by some predictor is indeed crucial in some applications (think of the difference between false-negative and false-positive predictions in a diagnosis automated system).
%% Commentaire LIVA : partie trop technique
%% \iffalse
%% Since we are only interested in minimizing the number of the classifier's errors, we can discard the values of the diagonal of the confusion matrix.
%% This means that if a classifier correctly classifies every example of a training sample $S$ then the confusion matrix associated with this classifier is equal to the zero matrix. 
%% \fi
To the best of our knowledge, we are the first to propose a generalization bound on the confusion matrix in the PAC-Bayesian framework.
The result that we propose heavily relies on the matrix concentration inequality for sums of random matrices introduced by \citet{Tropp2011}. 
One may anticipate that generalization bounds for the confusion matrix may also be obtained in other framework than the PAC-Bayesian one, such as the uniform stability framework, the online learning framework and so on.
%% Commentaire LIVA : Partie trop technique
%% \iffalse
%% One solution for using such a concentration inequality is to rewrite the empirical confusion matrix on $S$.
%% We propose to rewrite this confusion matrix as the sum of the confusion matrices of every example belonging to $S$: each individual example-based confusion matrix has at most one non-zero element and it is equal to the zero matrix if the example is correctly classified.
%% Since in the PAC setting the sample $S$ is drawn {\it i.i.d.} according to a distribution and a confusion matrix is a square matrix, then the sum of the example-based confusion matrices can be seen as the sum of random square matrices.
%% Then, we consider the operator norm of such a confusion matrix as an estimator of the risk.
%% \fi

%Then, each individual example-based confusion matrix has at most one non-zero element or it is equal to the zero matrix if the example is correctly classified.
%Rewriting the confusion matrix allows us to use the concentration inequalities given in  \cite{Tropp2011}, which are an extension of the Hoeffding's inequality for matrices.
%Indeed, since the Gibbs classifier is drawn randomly from a posterior distribution $\mathfrak{Q}$ over a family of classifiers, then the sum of the confusion matrices can be seen as the sum of random matrices.

%Finally, we upper-bound the true risk of the Gibbs classifier by its empirical risk and a term depending on the minimal number of training examples that belong to the same class. 

The rest of this paper is organized as follows. Sec. \ref{sec:notations} introduces the setting of multiclass learning and some of the basic notation used throughout the paper.
Sec. \ref{sec:binarybound} briefly recalls the folk PAC-Bayes bound as introduced in \cite{Mcallester03}. 
In Sec. \ref{sec:bound}, we present the main contribution of this paper, our PAC-Bayes bound on the confusion matrix, followed by its proof in Sec. \ref{sec:preuve}. We discuss some future works in Sec. \ref{sec:discussion}.

%%% Local Variables: 
%%% mode: latex
%%% TeX-master: "main_multibayes"
%%% End: 

\section{Setting and Notations}
\label{sec:notations}
 
This section presents the general setting that we consider and the
different tools that we will make use of. 

\subsection{General Problem Setting}
We consider classification tasks over the {\it input space} $X\!\subseteq\!\R^d$ of dimension $d$.
The {\it output space} is denoted by $Y\!=\!\{1,\dots,Q\}$, where $Q$ is the number of classes.
The learning sample is denoted by $S=\{(\xbf_i,y_i)\}_{i=1}^{m}$ where each example is drawn {\it i.i.d.} from a fixed ---but unknown--- probability distribution $\mathfrak{D}$ defined over $X\x Y$.
$\mathfrak{D}_m$ denotes the distribution of a $m$-sample.
$\Fcal\subseteq \R^X$  is a family of classifiers $f:X\to Y$. 
$\mathfrak{P}$ and $\mathfrak{Q}$ are respectively the {\it prior} and the {\it posterior} distributions over $\Fcal$.
Given the prior distribution $\mathfrak{P}$ and the training set $S$, the learning process consists in finding the posterior distribution $\mathfrak{Q}$ leading to a good generalization.

Since we make use of the prior distribution $\mathfrak{P}$ on $\Fcal$, a PAC-Bayes generalization bound depends on the Kullback-Leibler divergence (KL-divergence):
\begin{align}
\label{eq:KL}
KL(\mathfrak{Q}\|\mathfrak{P}) = \E_{f\sim\mathfrak{Q}}\log{\frac{\mathfrak{Q}(f)}{\mathfrak{P}(f)}}.
\end{align}

The function $\sign(x)$ is equal to $+1$ if $x\geq 0$ and $-1$ otherwise.
The indicator function $\I(x)$ is equal to $1$ if $x$ is true and $0$ otherwise.

\subsection{Conventions and Basics on Matrices}
Throughout the paper we consider only real-valued square  matrices
$\mathbf{C}$ of order $Q$ (the number of classes).
${}^t\mathbf{C}$ is the transpose of the matrix $\mathbf{C}$, 
$\mathbf{Id}_Q$ denotes the identity matrix of size $Q$ and
$\mathbf{0}$ is the zero matrix.

The results given in this paper are based on a concentration inequality of \citet{Tropp2011} for a sum of random self-adjoint matrices.
In the case when a matrix is not self-adjoint and is real-valued, we use the dilation of such a matrix, given in \citet{paulsen2002}, which is defined as follows:

%% \begin{equation}
%% \label{eq:dilatation-orig}
%% \mathcal{S}(\mathbf{C}) \egaldef 
%% \left(
%% \begin{array}{cc}
%% \mathbf{0} & \mathbf{C}\\
%% {}^*\mathbf{C} & \mathbf{0}
%% \end{array}
%% \right)\!.
%% \end{equation}

%% Note that since we consider only real-valued matrices, \eqref{eq:dilatation-orig} is equivalent to :

\begin{equation}
\label{eq:dilatation}
\mathcal{S}(\mathbf{C}) \egaldef 
\left(
\begin{array}{cc}
\mathbf{0} & \mathbf{C}\\
{}^t\mathbf{C} & \mathbf{0}
\end{array}
\right)\!.
\end{equation}

%% \begin{equation}
%% \label{eq:dilatationcarree}
%% \mathcal{S}(\mathbf{C})^2 = 
%% \left(
%% \begin{array}{cc}
%% {}^t\mathbf{C}\mathbf{C} &\mathbf{0} \\
%%  \mathbf{0} & \mathbf{C}{}^t\mathbf{C}
%% \end{array}
%% \right)\!.
%% \end{equation}

The symbol $\|\cdot\|$ corresponds to the {\it operator norm} also called the {\it spectral norm}: it returns the largest singular value of its argument, which is defined by
\begin{align}
\label{eq:norm}
\|\mathbf{C}\|=\max\{\lambda_{\max}(\mathbf{C}),-\lambda_{\min}(\mathbf{C})\},
\end{align}
 where $\lambda_{\max}$ and $\lambda_{\min}$ are respectively the algebraic maximum and minimum singular value of $\mathbf{C}$.
Note that the dilation preserves spectral information, so we have:
\begin{align}
\label{eq:preserve}
\lambda_{\max}\big(\mathcal{S}(\mathbf{C})\big)=\|\mathcal{S}(\mathbf{C})\|=\|\mathbf{C}\|.
\end{align}
Since $\|\cdot\|$ is a regular norm, the following equality obviously holds:
\begin{align}
\label{eq:scalar}
\forall a \in \R,\ \|a\Cbf\| = |a|\|\Cbf\|.
\end{align}
Given the matrices $\Cbf$ and $\Dbf$ both made of nonnegative elements and
such that $0\leq \Cbf\leq \Dbf$ (element-wise), we have:
\begin{align}
\label{eq:sp}
0\leq \Cbf \leq \Dbf \Rightarrow \|\Cbf\|\leq \|\Dbf\|.
\end{align}

%%% Local Variables: 
%%% mode: latex
%%% TeX-master: "main_multibayes"
%%% End: 

\section{The Usual PAC-Bayes Theorem}
\label{sec:binarybound}

In this section, we recall the main PAC-Bayesian bound in the binary classification case as presented in \citep{Mcallester03,Seeger02,Langford2005}.
The set of labels we consider is $Y=\{-1;1\}$ (with $Q=2$) 
and, for each classifier $f\in\mathcal{F}$, the predicted output of $\xbf\in X$ is given by $\sign(f(\xbf))$. 
The true risk $R(f)$ and the empirical error $R_S(f)$ of $f$ are defined as:
\begin{align*}
R(f)\egaldef \E_{(\xbf,y)\sim \mathfrak{D}} \I(f(\xbf\ne y))\qquad ; \qquad  R_S(f)\egaldef \frac{1}{m} \sum_{i=1}^m \I(f(\xbf_i\ne y_i)).
\end{align*}
The learner's aim is to choose a posterior distribution $\mathfrak{Q}$ on $\mathcal{F}$ such that the risk of the $\mathfrak{Q}$-weighted majority vote (also called the Bayes classifier) $B_{\mathfrak{Q}}$ is as small as possible.
$B_{\mathfrak{Q}}$ is defined by:
%The learner's aim is to choose a classifier $f\in\mathcal{F}$ so as to minimise the true risk $R(f)$.
\begin{align*}
B_{\mathfrak{Q}}(\xbf) = \sign\left[\E_{f\sim\mathfrak{Q}} f(\xbf)\right].
\end{align*}
The true risk $R(B_{\mathfrak{Q}})$ and the empirical error $R_S(B_{\mathfrak{Q}})$ of the Bayes classifier are defined as the probability that it commits an error on an example:
\begin{align}
\label{eq:bayesclassifier}
R(B_{\mathfrak{Q}}) \egaldef \PR_{(\xbf,y)\sim \mathfrak{D}} \left(B_{\mathfrak{Q}}(\xbf)\ne y\right).
\end{align}
However, the PAC-Bayes approach does not directly bound the risk of $B_{\mathfrak{Q}}$.
Instead, it bounds the risk of the stochastic Gibbs classifier
$G_{\mathfrak{Q}}$ which predicts the label of $\xbf\in X$ by first
drawing $f$ according to $\mathfrak{Q}$ and then returning $f(\xbf)$.
The true risk $R(G_{\mathfrak{Q}}) $ and the empirical error
$R_S(G_{\mathfrak{Q}}) $ of $G_{\mathfrak{Q}}$ are therefore:
\begin{align}
\label{eq:gibbsclassifier}
R(G_{\mathfrak{Q}})=\E_{f\sim\mathfrak{Q}} R(f)\qquad ;\qquad  R_S(G_{\mathfrak{Q}})=\E_{f\sim\mathfrak{Q}} R_S(f).
\end{align}
Note that in this setting, if $B_{\mathfrak{Q}}$ misclassifies $\xbf$, then at least half of the classifiers (under $\Qfrak$) commit an error on $\xbf$.
Hence, we directly have: $R(B_{\mathfrak{Q}})\leq 2R(G_{\mathfrak{Q}})$.
Thus, an upper bound on $R(G_{\mathfrak{Q}})$ gives rise to an upper bound on $R(B_{\mathfrak{Q}})$.

We present the PAC-Bayes theorem which gives a bound on the error of the stochastic Gibbs classifier.
\begin{theorem}[{\it i.i.d.} binary classification PAC-Bayes Bound]
\label{theo:binaryborne}
For any $\mathfrak{D}$, any $\mathcal{F}$, any $\mathfrak{P}$ of support $\mathcal{F}$, any $\delta\in (0,1]$, we have,
\begin{align*}
\PR_{S\sim \mathfrak{D}_m}  \Bigg(\forall \mathfrak{Q}\textrm{ on }\mathcal{F},\ kl\big(R_S(G_{\Qfrak}),R(G_{\Qfrak})\big) \leq 
 \frac{1}{m}\bigg[KL(\mathfrak{Q}\|\mathfrak{P})+\ln\frac{\xi(m)}{\delta}\bigg]\Bigg)& \geq  1-\delta,
\end{align*}
where $kl(a,b)\egaldef a\ln\frac{a}{b}+(1-a)\ln\frac{1-a}{1-b}$, and $\xi\egaldef \sum_{i=0}^m \binom{m}{i}(i/m)^i(1-i/m)^{m-i}$.
\end{theorem}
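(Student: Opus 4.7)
The plan is to follow the classical route of McAllester–Seeger–Langford, combining a change-of-measure inequality, Markov's inequality, a tight moment computation for a single classifier, and Jensen's inequality applied to the joint convexity of $kl$.

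\textbf{Step 1 (change of measure).} I would start from the Donsker–Varadhan variational inequality: for any measurable $\phi:\Fcal\times(X\times Y)^m\to\R$ and any posterior $\Qfrak\ll\Pfrak$,
\begin{align*}
\E_{f\sim\Qfrak}\,\phi(f,S)\ \leq\ KL(\Qfrak\|\Pfrak)\,+\,\ln\E_{f\sim\Pfrak}\,e^{\phi(f,S)}.
\end{align*}
I will plug in $\phi(f,S)=m\,kl(R_S(f),R(f))$, so that the left-hand side becomes $m\,\E_{f\sim\Qfrak}\,kl(R_S(f),R(f))$ and the randomness in $S$ is confined to the exponential moment on the right.

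\textbf{Step 2 (control the prior moment via Markov).} I want to bound $\E_{f\sim\Pfrak}\,e^{m\,kl(R_S(f),R(f))}$ with high probability over $S$. The trick is to swap the expectations by Fubini and compute the expectation over $S$ pointwise in $f$. For a fixed $f$, $mR_S(f)$ is $\mathrm{Binomial}(m,R(f))$, and a direct calculation using the identity
\begin{align*}
e^{m\,kl(i/m,\,R(f))}\;=\;\frac{(i/m)^{i}(1-i/m)^{m-i}}{R(f)^{i}(1-R(f))^{m-i}}
\end{align*}
cancels the binomial weights and yields exactly
\begin{align*}
\E_{S\sim\Dfrak_m}\,e^{m\,kl(R_S(f),R(f))}\;=\;\sum_{i=0}^{m}\binom{m}{i}\left(\tfrac{i}{m}\right)^{i}\!\left(1-\tfrac{i}{m}\right)^{m-i}\;=\;\xi(m),
\end{align*}
independently of $f$. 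Taking expectation over $\Pfrak$ preserves this, and Markov's inequality then gives, with probability at least $1-\delta$ over $S\sim\Dfrak_m$,
\begin{align*}
\E_{f\sim\Pfrak}\,e^{m\,kl(R_S(f),R(f))}\ \leq\ \frac{\xi(m)}{\delta}.
\end{align*}

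\textbf{Step 3 (Jensen to pass to the Gibbs classifier).} Combining Steps 1 and 2, with probability at least $1-\delta$, for every $\Qfrak$ simultaneously (the bound in Step 2 does not depend on $\Qfrak$),
\begin{align*}
m\,\E_{f\sim\Qfrak}\,kl(R_S(f),R(f))\ \leq\ KL(\Qfrak\|\Pfrak)\,+\,\ln\frac{\xi(m)}{\delta}.
\end{align*}
Since $kl(\cdot,\cdot)$ is jointly convex and $R_S(G_\Qfrak)=\E_{f\sim\Qfrak}R_S(f)$, $R(G_\Qfrak)=\E_{f\sim\Qfrak}R(f)$, Jensen's inequality gives $kl(R_S(G_\Qfrak),R(G_\Qfrak))\leq\E_{f\sim\Qfrak}\,kl(R_S(f),R(f))$. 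Dividing by $m$ yields the claimed bound.

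\textbf{Main obstacle.} The only genuinely nontrivial step is the binomial identity in Step 2 that makes the moment generating function collapse to $\xi(m)$; everything else is either a standard variational duality, Markov, or Jensen. The fact that the right-hand side of Step 2 does not depend on $f$ (and hence, after change of measure, not on $\Qfrak$ inside the probability) is what allows the bound to hold uniformly over all posteriors on a single high-probability event.
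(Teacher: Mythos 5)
Your proof is correct and follows exactly the standard three-step route (Donsker--Varadhan change of measure, Fubini plus Markov on the prior exponential moment with the binomial identity giving $\xi(m)$, then Jensen via joint convexity of $kl$) that the paper attributes to \cite{Mcallester03,Seeger02,Langford2005} and itself mirrors in Section \ref{sec:preuve} for the confusion-matrix bound. Nothing to add.
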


We now provide a novel PAC-Bayes bound in the context of multiclass classification by considering the confusion matrix as an error measure.

%%% Local Variables: 
%%% mode: latex
%%% TeX-master: "main_multibayes"
%%% End: 

\section{Multiclass PAC-Bayes Bound}
\label{sec:bound}

\subsection{Definitions and Setting}
As said earlier, we focus on multiclass classification. The output space is $Y = \{1,\dots,Q\}$, with $Q>2$. 
We only consider learning algorithms acting on learning sample $S=\{(\xbf_i,y_i)\}_{i=1}^{m}$ where each example is drawn {\it i.i.d} according to $\mathfrak{D}$, such that $|S|\geq Q$ and $m_{y_j}\geq 1$ for every class $y_j \in Y$, where $m_{y_j}$ is the number of examples of real class $y_j$.
In the context of multiclass classification, an error measure can be the {\em confusion matrix}.
Especially, we consider a confusion matrix builds upon the classical definition  based on conditional probalities:
It is inherent (and desirable)  to 'hide' the effects of diversely represented classes.
Concretely, for a given classifier $f\in\mathcal{F}$ and a sample $S = \{ (\xbf_i,y_i) \}_{i=1}^m \sim \mathfrak{D}_m$, the {\em empirical confusion matrix} $\mathbf{D}_S^f=(\hat{d}_{pq})_{1\leq p, q \leq Q}$ of $f$ is defined as follows:
\begin{align*}
\forall (p,q),\ \hat{d}_{pq} \egaldef 
\displaystyle\sum_{i=1}^m\frac{1}{m_{y_i}}\I(f(\xbf_i)=q)\I(y_i=p).
\end{align*}
The {\em true confusion matrix} $\mathbf{D}^f=(d_{pq})_{1\leq p,q \leq Q}$ of $f$ over $\mathfrak{D}$ corresponds to:
\begin{align*}
\forall (p,q),\ d_{pq} \egaldef\ &\E_{\xbf|y=p}\I\big(f(\xbf)=q\big)\\
=\ &\PR_{(\xbf,y)\sim \mathfrak{D}}( f(\xbf)=q |y=p ).
\end{align*}

If $f$ correctly classifies every example of the sample $S$, then all the elements of the confusion matrix are $0$, except for the diagonal ones which correspond to the correctly classified examples.
Hence the more there are non-zero elements in a confusion matrix outside the diagonal, the more the classifier is prone to err.
Recall that in a learning process the objective is to learn a classifier $f\in\mathcal{F}$ with a low true error ({\it i.e.} with good generalization guarantees),  we are thus only interested in the errors of $f$.
Our objective is then to find $f$ leading to a confusion matrix with the more zero elements outside the diagonal.
Since the diagonal gives the conditional probabilities of 'correct' predictions, we propose to consider a different kind of confusion matrix by discarding the diagonal values.
%b) They can be retrieved from the information collected by the entries of their corresponding lines in the matrix 
Then the only non-zero elements of the new confusion matrix correspond to the examples that are misclassified by $f$.
For all $f\in\mathcal{F}$ we define the empirical and true confusion matrices of $f$ by respectively  $\mathbf{C}_S^f=(\hat{c}_{pq})_{1\leq p,q \leq Q}$ and $\mathbf{C}^f=(c_{pq})_{1\leq p,q \leq Q}$ such that: 
\begin{align}
\label{eq:MC_empirique}
\forall (p,q),\ \hat{c}_{pq}  &\egaldef  \left\{
\begin{array}{ll}
0&\textrm{if } q=p\\
   \displaystyle\hat{d}_{pq}& \textrm{otherwise},
\end{array}\right.\\
\label{eq:MC_reelle}
\forall (p,q),\ c_{pq}  &\egaldef  \left\{
\begin{array}{ll}
   0&\textrm{if } q=p\\
   d_{pq} = \PR_{(\xbf,y)\sim \mathfrak{D}}( f(\xbf)=q | p=y )&\textrm{otherwise}.
\end{array}\right.
\end{align} 

Note that if $f$ correctly classifies every example of a given sample $S$, then the empirical confusion matrix $\mathbf{C}_S^f$ is equal to $\mathbf{0}$.
Similarly, if $f$ is a perfect classifier over the distribution $\mathfrak{D}$, then the true confusion matrix is equal to $\mathbf{0}$.
Aiming at controlling the confusion matrix of a classifier is therefore a relevant task. More precisely, one may aim at a confusion matrix that
is `small', where `small' means as close to $\mathbf{0}$ as possible. As we shall see, the size of a confusion matrix will be measured by its operator norm.

\subsection{Main Result:  Confusion PAC-Bayes Bound for the Gibbs Classifier}

Our main result is a PAC-Bayes generalization bound that holds for
the Gibbs classifier $G_{\mathfrak{Q}}$ in the particular context of
multiclass prediction, where the empirical and true error measures are
respectively given by the confusion matrices defined by \eqref{eq:MC_empirique} and \eqref{eq:MC_reelle}.
In this case, we can define the true and the empirical confusion matrices of $G_{\mathfrak{Q}}$ respectively by:
\begin{align*} 
\mathbf{C}^{G_{\mathfrak{Q}}} =  \E_{f\sim\mathfrak{Q}} \E_{S\sim\mathfrak{D}_m} \mathbf{C}_S^f\qquad ; \qquad  \mathbf{C}_S^{G_{\mathfrak{Q}}} = \E_{f\sim\mathfrak{Q}}  \mathbf{C}_S^f.
\end{align*}

Given $f\sim \mathfrak{Q}$ and a sample $S\sim\mathfrak{D}_m$, our objective is to bound the difference between $\mathbf{C}^{G_{\mathfrak{Q}}}$ and $\mathbf{C}_S^{G_{\mathfrak{Q}}}$, the true and empirical errors of the Gibbs classifier. 
Remark the error rate $P(f(\xbf)\neq y)$ of a classifier $f$ might be
directly computed as the $1$-norm of ${}^t\Cbf^{f}\bf p$ with $\bf p$
the vector of prior class probabilities. A route to get results based
on the confusion matrix would then be to have a bound on the induced
$1$-norm of $\Cbf$ (which is defined by:
$\max\|{}^t\Cbf^f{\bf p}\|_1/\|{\bf p}\|_1$).
However, we do not have at hand concentration inequalities for the
$1$-norm of matrices and but we only have at our disposal such
concentration inequalities for the operator norm.
Since we have $\|{\bf u}\|_1\!\leq\!\sqrt{Q}\|{\bf u}\|_2$ for any $Q$-dimensional vector $\bf u$, we have that $P(f(\xbf)\neq y)\! \leq\! \sqrt{Q}\|\Cbf^f\|_{op}$, trying to minimize the operator norm of $\Cbf^f$ might be a relevant strategy to control the risk.
This norm will allow us to formally relate the true and empirical confusion matrices of the Gibbs classifier and also to provide a bound on  $\|\mathbf{C}^{G_{\mathfrak{Q}}}\|$ of the true confusion matrix.

%% The structure that we will consider in the space of confusion matrices is the one induced by the operator norm (Equation \eqref{eq:norm}) on the set of matrices. This norm 
%% will allow us to formally relate the true and empirical confusion matrices of the Gibbs classifier and it also will allow us to provide
%% a bound on the size $\|\mathbf{C}^{G_{\mathfrak{Q}}}\|$ of the true confusion matrix.

Here is our main result.

\begin{theorem}
\label{theo:borne}
Let $X \subseteq \R^d$ be the input space, $Y = \{1,\dots,Q\}$ the output space, $\mathfrak{D}$ a distribution over $X \x Y$ (with $\mathfrak{D}_m$ the distribution of a $m$-sample) and $\mathcal{F}$ a family of classifiers from $X$ to $Y$.
Then for every prior distribution $\mathfrak{P}$ over $\mathcal{F}$ and any $\delta \in (0,1]$, we have: 
\begin{align*}
\PR_{S\sim \mathfrak{D}_m} &\Bigg\{\forall \mathfrak{Q}\textrm{ on }\mathcal{F}, \| \mathbf{C}_S^{G_{\mathfrak{Q}}} -  \mathbf{C}^{G_{\mathfrak{Q}}} \|  \leq \sqrt{\displaystyle\frac{8Q}{m_--8Q} \left[KL(\mathfrak{Q}||\mathfrak{P})+\ln\left(\frac{m_-}{4\delta}\right)\right]}\Bigg\}  \geq  1-\delta,
\end{align*}
where $m_-=\min_{y=1,\dots,Q}m_{y}$ corresponds to the minimal number of examples from $S$ which belong to the same class.
\end{theorem}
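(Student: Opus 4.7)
The plan is to combine the matrix concentration inequality of \cite{Tropp2011} with the usual PAC-Bayesian \emph{change-of-measure} argument, using the dilation $\mathcal{S}(\cdot)$ of \eqref{eq:dilatation} to reduce the non self-adjoint case to the self-adjoint one. Because the final bound depends on $m_-$, I would first condition on the per-class sample sizes $(m_1,\ldots,m_Q)$, so that the training sample is viewed as $Q$ independent class-conditional i.i.d.\ blocks and the denominators $1/m_{y_i}$ appearing in $\hat{c}_{pq}$ become deterministic. For a fixed $f\in\Fcal$, the centered empirical confusion matrix decomposes as
$$\Cbf_S^f-\Cbf^f \;=\; \sum_{i=1}^{m} M_i^f,$$
a sum of $m$ independent, centered, real matrices $M_i^f$ where each $M_i^f$ has a single nonzero row (the $y_i$-th one) with entries of order $1/m_{y_i}$; in particular $\|M_i^f\|\leq 2/m_-$.

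Next I would apply the matrix Laplace-transform / matrix Hoeffding result of \cite{Tropp2011} to the self-adjoint sum $\mathcal{S}(\Cbf_S^f-\Cbf^f)=\sum_i\mathcal{S}(M_i^f)$. Together with \eqref{eq:preserve}, this yields, for every $\theta>0$, a trace-MGF estimate of the form
$$\E_{S}\,\mathrm{tr}\exp\!\bigl(\theta\,\mathcal{S}(\Cbf_S^f-\Cbf^f)\bigr)\;\leq\; 2Q\,\exp\!\Big(\tfrac{c\,Q\,\theta^{2}}{m_-}\Big),$$
where the dimensional factor $2Q$ comes from the size of the dilated matrix and the variance proxy $Q/m_-$ comes from $\sum_i\|M_i^f\|^2\leq\sum_p m_p(1/m_p)^2=\sum_p 1/m_p\leq Q/m_-$. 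A Chernoff-type integration, together with the identity $\|\Cbf_S^f-\Cbf^f\|=\lambda_{\max}(\mathcal{S}(\Cbf_S^f-\Cbf^f))$, converts this into a \emph{uniform-in-$f$} exponential-moment bound of the form $\E_S\exp\!\bigl(\lambda\,\|\Cbf_S^f-\Cbf^f\|^2\bigr)\leq\phi(\lambda,m_-,Q)$, valid for $\lambda$ below a threshold of order $m_-/Q$.

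With this key estimate in hand, the rest is the textbook PAC-Bayes manipulation. Jensen's inequality applied to the convex map $\Cbf\mapsto\|\Cbf\|^2$ gives $\|\Cbf_S^{G_{\Qfrak}}-\Cbf^{G_{\Qfrak}}\|^{2}\leq\E_{f\sim\Qfrak}\|\Cbf_S^f-\Cbf^f\|^{2}$, and the Donsker--Varadhan change-of-measure inequality then yields, for any $\Qfrak$ and any admissible $\lambda>0$,
$$\lambda\,\|\Cbf_S^{G_{\Qfrak}}-\Cbf^{G_{\Qfrak}}\|^{2} \;\leq\; KL(\Qfrak\|\Pfrak)+\log\E_{f\sim\Pfrak}\,e^{\lambda\,\|\Cbf_S^f-\Cbf^f\|^{2}}.$$
Swapping $\E_S$ and $\E_{f\sim\Pfrak}$ by Fubini, plugging in the uniform bound from the previous step, and applying Markov's inequality in $S$ yields a high-probability statement valid uniformly over all posteriors $\Qfrak$. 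A final explicit choice of $\lambda$ (roughly $m_-/(16Q)$) and some elementary algebra should recover the announced constants $8Q/(m_--8Q)$ and $\ln(m_-/(4\delta))$.

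The main obstacle I expect lies in the second step: obtaining the variance proxy in Tropp's inequality with the \emph{right} $Q/m_-$ scaling and clean constants, since sloppy accounting of the operator norms of the $M_i^f$ can easily cost additional factors of $Q$ or $m/m_-$. Moreover, the conversion from a sub-Gaussian tail on $\|\Cbf_S^f-\Cbf^f\|$ to an exponential moment on its \emph{square} (needed because the Donsker--Varadhan step is most naturally taken with a quadratic deviation) forces $\lambda$ to stay strictly below a critical threshold; this is exactly what produces the denominator $m_--8Q$ rather than $m_-$ in the final statement, so tracking this threshold precisely is necessary to obtain the advertised constants.
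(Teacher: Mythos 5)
Your proposal follows essentially the same route as the paper's proof: dilation to reduce to self-adjoint matrices, Tropp's matrix Hoeffding bound with variance proxy $\sum_i 1/m_{y_i}^2 \leq Q/m_-$, conversion of the tail bound into an exponential moment, then the standard Donsker--Varadhan change of measure, Jensen, and Markov steps, with the critical threshold on $\lambda$ producing the $m_- - 8Q$ denominator exactly as you anticipate. Your explicit conditioning on the per-class counts $(m_1,\dots,m_Q)$ to make the $1/m_{y_i}$ deterministic is in fact a point the paper leaves implicit, so the proposal is, if anything, slightly more careful on that detail.
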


\begin{proof}Deferred to Section \ref{sec:preuve}.
\end{proof}
\noindent{}Note that, for all $y\in Y$, we need the following hypothesis: $m_{y}> 8Q$, which is not too strong a limitation.

Finally, we rewrite Theorem \ref{theo:borne} to have the size of the confusion matrix under consideration.
\begin{cor}
\label{cor:borne}
We consider the  hypothesis of the Theorem \ref{theo:borne}.
We have:
\begin{align*}
 \PR_{S\sim \mathfrak{D}_m} \Bigg\{ \forall \mathfrak{Q}\textrm{ on }\mathcal{F},\ \|  \mathbf{C}^{G_{\mathfrak{Q}}}\| \leq \|\mathbf{C}_S^{G_{\mathfrak{Q}}}\| +  \sqrt{\displaystyle\frac{8Q}{m_- - 8Q} \left[KL(\mathfrak{Q}||\mathfrak{P})+\ln\left(\frac{m_-}{4\delta}\right)\right]}\Bigg\}  \geq  1-\delta.
\end{align*}
\end{cor}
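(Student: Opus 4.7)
The corollary is essentially an immediate consequence of Theorem \ref{theo:borne} combined with the reverse triangle inequality for the operator norm. The plan is to apply Theorem \ref{theo:borne} to obtain, with probability at least $1-\delta$ over $S\sim \mathfrak{D}_m$, the uniform bound
\[
\forall \mathfrak{Q},\quad \| \mathbf{C}_S^{G_{\mathfrak{Q}}} - \mathbf{C}^{G_{\mathfrak{Q}}} \| \leq \sqrt{\frac{8Q}{m_- - 8Q}\left[KL(\mathfrak{Q}\|\mathfrak{P}) + \ln\left(\frac{m_-}{4\delta}\right)\right]},
\]
and then, on this event, to use that $\|\cdot\|$ is a norm and therefore satisfies $\|A\| \leq \|B\| + \|A - B\|$ for any matrices $A,B$ of compatible size. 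Setting $A = \mathbf{C}^{G_{\mathfrak{Q}}}$ and $B = \mathbf{C}_S^{G_{\mathfrak{Q}}}$ and chaining the two inequalities yields exactly the statement of the corollary, simultaneously for every posterior $\mathfrak{Q}$ (since the event from Theorem \ref{theo:borne} is already a uniform-in-$\mathfrak{Q}$ event).

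There is essentially no obstacle: the only thing to be careful about is that the high-probability event in Theorem \ref{theo:borne} holds uniformly over all posteriors $\mathfrak{Q}$, so the same event suffices to derive the corollary's uniform-in-$\mathfrak{Q}$ conclusion. One does not need any additional property of the dilation, of the KL divergence, or of the concentration machinery beyond what is already encapsulated in Theorem \ref{theo:borne}; the single algebraic fact invoked is the reverse triangle inequality for the spectral norm, which follows from the general fact $\|A\|-\|B\| \leq \|A-B\|$ valid for any norm on a vector space of matrices.
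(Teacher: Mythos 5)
Your proposal is correct and follows exactly the paper's own argument: the published proof is precisely an application of the reverse triangle inequality $\bigl|\|\mathbf{A}\|-\|\mathbf{B}\|\bigr|\leq\|\mathbf{A}-\mathbf{B}\|$ to the uniform-in-$\mathfrak{Q}$ event of Theorem \ref{theo:borne}. Your added remark that the same high-probability event carries the uniformity over posteriors is a correct (and worthwhile) clarification of what the paper leaves implicit.
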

\begin{proof}
By application of the reverse triangle inequality 
$|\|\mathbf{A}\|-\|\mathbf{B}\||\leq\|\mathbf{A}-\mathbf{B}\|$ to Theorem \ref{theo:borne}.
\end{proof}

For a fixed prior $\Pfrak$ on $\Fcal$, both Theorem \ref{theo:borne} and Corollary \ref{cor:borne} yield a bound on the estimation (through the operator norm) of the true confusion matrix of the Gibbs classifier over all\footnote{This includes any $\Qfrak$ chosen by the learner after observing $S$.} posterior distribution $\mathfrak{Q}$ on $\Fcal$, though this is more explicit in the corollary.
Let the number of classes $Q$ be a constant, then the true risk is upper-bounded by the empirical risk of the Gibbs classifier and a term depending on the number of training examples, especially on the value $m_-$ which corresponds to the minimal quantity of examples that belong to the same class.
This means that the larger $m_-$, the closer the empirical confusion matrix of the Gibbs classifier to its true matrix. 
These bounds use first-order information and vary as $O(1/\sqrt{m_-})$, which is a typical rate of bounds not using second-order information.

\subsection{Upper Bound on the Risk of the Majority Vote Classifier}

Our multiclass upper bound given for the risk of Gibbs classifiers leads to an upper bound for the risk of Bayes classifiers in the following way by the Proposition \ref{prop:borneriskbayes}.
We recall that the Bayes classifier $B_{\Qfrak}$ is well known as majority vote classifier under a given posterior distribution $\Qfrak$. 
In the multiclass setting, $B_{\Qfrak}$ is such that for any example  it returns the majority class under the measure $\Qfrak$ and we define it as:
\begin{align}
\label{eq:bayes_multiclasse}
B_{\Qfrak}(\xbf) = \argmax_{c\in Y}\Big[\E_{f\in \posterior}\I(f(\xbf)=c)\Big].
\end{align}
We define the conditional Gibbs risk $R(G_{\posterior},p,q)$ and Bayes risk
$R(G_{\posterior},p,q)$ as
\begin{align}
R(G_{\posterior},p,q)& = \expectation_{\xbf\sim
  D_{|y=p}}\expectation_{f\sim\posterior}\I(f(\xbf)=q),\label{eq:x_gibbs}\\
R(B_{\posterior},p,q)&=\expectation_{\xbf\sim D_{|y=p}}\I\left(\argmax_{c\in Y} \Big[ \E_{f\in \posterior}\I(f(\xbf)=c) = q\Big]\right). \label{eq:x_bayes}
\end{align}
The former is the $(p,q)$ entry of $\Cbf^{G_{\posterior}}$ (if $p\neq
q$) and the latter is the $(p,q)$ entry of $\Cbf^{B_{\posterior}}$.

\begin{propo}
\label{prop:borneriskbayes}
Given $Q\geq 2$ the number of class.
The true conditional risk of the Bayes classifier  and the one of the Gibbs classifier   are related by the following inequality:
\begin{align}
\label{eq:borneriskbayes}
\forall (q,p),  R(B_{\Qfrak},p,q)  \leq QR(G_{\Qfrak},p,q).
\end{align}
\end{propo}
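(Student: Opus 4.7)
The plan is to establish the inequality pointwise in $\xbf$ and then integrate against $\mathfrak{D}_{|y=p}$. Fix $\xbf \in X$ and consider the vector $(\pi_c(\xbf))_{c\in Y}$ defined by $\pi_c(\xbf) = \E_{f\sim\posterior}\I(f(\xbf)=c)$. Since for every $f\in\Fcal$ the output $f(\xbf)$ lies in exactly one class of $Y$, a simple exchange of sum and expectation gives $\sum_{c\in Y}\pi_c(\xbf) = 1$. In particular, the class $c^\star = B_{\posterior}(\xbf) = \argmax_{c\in Y}\pi_c(\xbf)$ satisfies $\pi_{c^\star}(\xbf) \geq 1/Q$, by the pigeonhole principle on $Q$ nonnegative reals summing to $1$.

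Next, I would derive the pointwise comparison
\begin{equation*}
\I\bigl(B_{\posterior}(\xbf)=q\bigr)\ \leq\ Q\,\pi_q(\xbf)\ =\ Q\,\E_{f\sim\posterior}\I(f(\xbf)=q).
\end{equation*}
This is immediate by cases: if $B_{\posterior}(\xbf)\neq q$, the left-hand side is $0$ and there is nothing to prove; if $B_{\posterior}(\xbf)=q$, then by the previous step $\pi_q(\xbf)\geq 1/Q$, so $Q\,\pi_q(\xbf)\geq 1\geq \I(B_{\posterior}(\xbf)=q)$.

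Finally, integrate both sides against $\xbf\sim\mathfrak{D}_{|y=p}$. The left-hand side becomes $R(B_{\posterior},p,q)$ by definition \eqref{eq:x_bayes}, while the right-hand side becomes $Q\,R(G_{\posterior},p,q)$ by definition \eqref{eq:x_gibbs}, yielding \eqref{eq:borneriskbayes}. There is no real obstacle here: the argument is a straightforward multiclass analogue of the classical factor-of-$2$ bound recalled in Section \ref{sec:binarybound}, with $2$ replaced by $Q$ because the argmax of $Q$ probabilities summing to $1$ is at least $1/Q$ rather than $1/2$.
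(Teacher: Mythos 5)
Your proof is correct and follows essentially the same route as the paper: both arguments rest on the observation that the $Q$ quantities $\E_{f\sim\Qfrak}\I(f(\xbf)=c)$ sum to $1$, so the argmax class has weight at least $1/Q$, which gives the pointwise bound $\I(B_{\Qfrak}(\xbf)=q)\leq Q\,\E_{f\sim\Qfrak}\I(f(\xbf)=q)$ before integrating over $\xbf\sim\Dfrak_{|y=p}$. Your case split is a slightly more direct way of packaging the step the paper carries out via the intermediate indicator $\indicator{\gamma_q(\xbf)\geq 1/Q}$, but the substance is identical.
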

\begin{proof}
Deferred to Appendix.
\end{proof}
This proposition implies the following result.
\begin{cor}
\label{cor:bornebayes}
Given $Q\geq 2$ the number of class.
The true confusion matrix of the Bayes classifier $\Cbf^{B_{\Qfrak}}$  and the one of the Gibbs classifier $\Cbf^{G_{\Qfrak}}$ are related by the following inequality:
\begin{align}
\label{eq:bornebayes}
\| \Cbf^{B_{\Qfrak}} \| \leq Q\|\Cbf^{G_{\Qfrak}}\|.
\end{align}
%where  and $\Cbf^{B_{\Qfrak}}$ are respectively the true confusion matrix of the Bayes classifier and the Gibbs classifier.
\end{cor}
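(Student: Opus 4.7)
My plan is to combine Proposition \ref{prop:borneriskbayes} (the entrywise comparison between the Bayes and Gibbs conditional risks) with the monotonicity property of the operator norm on nonnegative matrices recorded in equations \eqref{eq:sp} and \eqref{eq:scalar}. The argument is short because the heavy lifting has already been done in Proposition \ref{prop:borneriskbayes}; all that remains is to pass from an entrywise inequality to an inequality on operator norms.

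First, I would observe that both $\Cbf^{B_{\Qfrak}}$ and $\Cbf^{G_{\Qfrak}}$ are matrices with nonnegative entries and vanishing diagonal, by the very definitions \eqref{eq:MC_empirique}--\eqref{eq:MC_reelle} (applied to the Bayes and Gibbs classifiers) together with \eqref{eq:x_gibbs}--\eqref{eq:x_bayes}. Concretely, for $p \neq q$ the $(p,q)$ entry of $\Cbf^{G_{\Qfrak}}$ is $R(G_{\Qfrak},p,q)$ and the $(p,q)$ entry of $\Cbf^{B_{\Qfrak}}$ is $R(B_{\Qfrak},p,q)$, while both matrices are zero on the diagonal.

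Second, I would invoke Proposition \ref{prop:borneriskbayes} to conclude that, entrywise,
\begin{align*}
0 \;\leq\; \Cbf^{B_{\Qfrak}} \;\leq\; Q\,\Cbf^{G_{\Qfrak}},
\end{align*}
since the inequality $R(B_{\Qfrak},p,q) \leq Q R(G_{\Qfrak},p,q)$ holds for every pair $(p,q)$ (and trivially on the diagonal, where both sides vanish). Applying the monotonicity of the operator norm for nonnegative matrices \eqref{eq:sp} and then the positive homogeneity \eqref{eq:scalar} gives
\begin{align*}
\|\Cbf^{B_{\Qfrak}}\| \;\leq\; \|Q\,\Cbf^{G_{\Qfrak}}\| \;=\; Q\,\|\Cbf^{G_{\Qfrak}}\|,
\end{align*}
which is the desired bound \eqref{eq:bornebayes}.

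There is no real obstacle here: the delicate step is Proposition \ref{prop:borneriskbayes} itself, which I am allowed to take as given. The only thing one has to be careful about is making sure that the diagonal-zeroing convention used in \eqref{eq:MC_empirique}--\eqref{eq:MC_reelle} is consistent with \eqref{eq:sp}, i.e.\ that both matrices really are entrywise nonnegative so that \eqref{eq:sp} applies; this is immediate from the probabilistic interpretation of the entries.
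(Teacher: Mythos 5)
Your proof is correct and takes essentially the same route as the paper: the entrywise bound $0\leq \Cbf^{B_{\Qfrak}}\leq Q\,\Cbf^{G_{\Qfrak}}$ from Proposition~\ref{prop:borneriskbayes}, followed by the monotonicity property \eqref{eq:sp} and homogeneity \eqref{eq:scalar} of the operator norm. The only cosmetic difference is that the paper first passes both matrices through the dilation $\Scal(\cdot)$ and invokes \eqref{eq:preserve} before applying \eqref{eq:sp}, a detour your argument legitimately skips since \eqref{eq:sp} is stated for arbitrary matrices with nonnegative entries.
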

\begin{proof}
Deferred to Appendix.
\end{proof} 

%%% Local Variables: 
%%% mode: latex
%%% TeX-master: "main_multibayes"
%%% End: 

\section{Proof of Theorem \ref{theo:borne}}
\label{sec:preuve}

This section gives the formal proof of Theorem \ref{theo:borne}.
We first introduce a concentration inequality for a sum of random square matrices.
This  allows us to deduce the PAC-Bayes generalization bound for
confusion matrices by following the same ``three step process''  as the one given in \cite{Mcallester03,Seeger02,Langford2005} for the classic PAC-Bayesian bound.

\subsection{Concentration Inequality for the Confusion Matrix}
\label{sec:ineg}
The main result of our work is based on the following corollary of a
result on the concentration inequality for a sum of self-adjoint
matrices given by \citet{Tropp2011} (see Theorem \ref{theo:tropp} in
Appendix) -- this theorem generalizes Hoeffding's inequality to the
case self-adjoint random matrices. 
The purpose of the following corollary is to restate the Theorem
\ref{theo:tropp} so that it carries over to matrices that are not
self-adjoint. It is central to us to have such a result as the
matrices we are dealing with, namely confusion matrices, are rarely symmetric.
\begin{cor} 
\label{cor:tropp}

Consider a finite sequence $\{\mathbf{M}_i\}$ of independent, random, 
square matrices of order $Q$, and let $\{a_i\}$ be a sequence of fixed scalars.
Assume that each random matrix satisfies $\E_i \mathbf{M}_i=\mathbf{0}$ and $\|\mathbf{M}_i\| \leq a_i$ almost surely.
Then, for all $\epsilon\geq 0$,
\begin{align}
\label{eq:ineg}
\PR\left\{ \|\sum_i \mathbf{M}_i\| \geq \epsilon \right\} \leq 2.Q.\exp\left(\frac{-\epsilon^2}{8\sigma^2}\right),
\end{align}
where $\sigma^2\egaldef \sum_i a_i^2$.
\end{cor}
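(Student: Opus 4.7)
My plan is to reduce the general (possibly non self-adjoint) case to Tropp's self-adjoint matrix Hoeffding inequality via the dilation trick from \eqref{eq:dilatation}. For each $i$, I would set $\mathbf{Y}_i \egaldef \mathcal{S}(\mathbf{M}_i)$; each $\mathbf{Y}_i$ is self-adjoint of order $2Q$ by construction, and $\mathcal{S}$ is a linear operator on matrices. Linearity immediately gives $\E\, \mathbf{Y}_i = \mathcal{S}(\E\, \mathbf{M}_i) = \mathbf{0}$, while the spectrum-preservation identity \eqref{eq:preserve} gives $\|\mathbf{Y}_i\| = \|\mathbf{M}_i\| \leq a_i$ almost surely. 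Because the $\mathbf{M}_i$ are independent and $\mathcal{S}$ is a deterministic map, the $\mathbf{Y}_i$ inherit independence as well.

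Next, using linearity once more, $\sum_i \mathbf{Y}_i = \mathcal{S}(\sum_i \mathbf{M}_i)$, and applying \eqref{eq:preserve} to this sum yields
$$\lambda_{\max}\!\left(\sum_i \mathbf{Y}_i\right) = \left\|\mathcal{S}\!\left(\sum_i \mathbf{M}_i\right)\right\| = \left\|\sum_i \mathbf{M}_i\right\|.$$
Hence the events $\{\|\sum_i \mathbf{M}_i\|\geq \epsilon\}$ and $\{\lambda_{\max}(\sum_i \mathbf{Y}_i)\geq \epsilon\}$ coincide, and the problem is reduced to an upper bound on $\lambda_{\max}$ of a sum of independent, centered, self-adjoint matrices of ambient dimension $2Q$, precisely the setting Tropp's theorem is designed for.

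Finally, I would invoke Theorem \ref{theo:tropp} on the $\mathbf{Y}_i$. The uniform bound $\|\mathbf{Y}_i\|\leq a_i$ forces $\mathbf{Y}_i^2 \preccurlyeq a_i^2\,\mathbf{Id}_{2Q}$, so the variance proxy that appears in Tropp's bound collapses to the scalar $\sigma^2 = \sum_i a_i^2$, and the ambient dimension $2Q$ enters linearly as the prefactor. This yields
$$\PR\!\left\{\lambda_{\max}\!\left(\sum_i \mathbf{Y}_i\right)\geq \epsilon\right\} \leq 2Q \exp\!\left(\frac{-\epsilon^2}{8\sigma^2}\right),$$
which, after transporting the event back through the identity above, is exactly \eqref{eq:ineg}.

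The only real hurdle is conceptual: recognizing that the dilation is the natural bridge between rectangular/non self-adjoint matrix concentration and self-adjoint matrix concentration. Once the three elementary properties of $\mathcal{S}$ are in hand---self-adjointness of the output, preservation of the operator norm, and linearity---the proof amounts to bookkeeping, with no genuine computation required.
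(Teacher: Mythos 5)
Your proposal is correct and follows essentially the same route as the paper's own proof: dilate each $\mathbf{M}_i$ into a self-adjoint matrix of order $2Q$, use linearity and the spectrum-preservation identity \eqref{eq:preserve} to transfer both the hypotheses and the event $\{\|\sum_i\mathbf{M}_i\|\geq\epsilon\}$ to the dilated sum, bound $\mathcal{S}(\mathbf{M}_i)^2\preccurlyeq a_i^2\,\mathbf{Id}_{2Q}$, and invoke Theorem \ref{theo:tropp} with ambient dimension $2Q$ to obtain the prefactor $2Q$ and variance proxy $\sigma^2=\sum_i a_i^2$. No gaps.
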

\begin{proof}We want to verify the hypothesis given in  Theorem \ref{theo:tropp} in order to apply it.\\
Let $\{\mathbf{M}_i\}$ be a finite sequence of independent, random, square matrices of order $Q$ such that $\E_i \mathbf{M}_i=\mathbf{0}$ and let $\{a_i\}$ be a sequence of fixed scalars such that $\|\mathbf{M}_i\| \leq a_i$.
We consider the sequence $\{\mathcal{S}(\mathbf{M}_i)\}$ of random self-adjoint matrices with dimension $2Q$.
By the definition of the dilation, we directly obtain $\E_i \mathcal{S}(\mathbf{M}_i)=\mathbf{0}$.\\
From Equation \eqref{eq:preserve}, the dilation preserves the spectral information.
Thus, on the one hand, we have:
\begin{align*}
\|\sum_i \mathbf{M}_i\| = \lambda_{\max}\bigg(\mathcal{S}\Big( \sum_i \mathbf{M}_i\Big) \bigg) = \lambda_{\max}\Big(\sum_i \mathcal{S}(\mathbf{M}_i)\Big).
\end{align*}
On the other hand, we have: 
\begin{align*}
\|\mathbf{M}_i\|= \|\mathcal{S}(\mathbf{M}_i)\| = \lambda_{\max}\big(\mathcal{S}(\mathbf{M}_i)\big)  \leq a_i.
\end{align*}
To assure the hypothesis $\mathcal{S}(\mathbf{M}_i)^2\leqo \mathbf{A}_i^2$, we need to find a suitable sequence of fixed self-adjoint matrices $\{\mathbf{A}_i\}$ of dimension $2Q$ (where $\leqo$ refers to the semidefinite order on self-adjoint matrices).
Indeed, it suffices  to construct a diagonal matrix defined as $\lambda_{\max}\big(\mathcal{S}(\mathbf{M}_i)\big)\mathbf{Id}_{2Q}$ for ensuring $\mathcal{S}(\mathbf{M}_i)^2\leqo \big(\lambda_{\max}\big(\mathcal{S}(\mathbf{M}_i)\big)\mathbf{Id}_{2Q}\big)^2$.
More precisely, since for every $i$ we have  $\lambda_{\max}\big(\mathcal{S}(\mathbf{M}_i)\big) \leq a_i$,  we fix  $\mathbf{A}_i$ as a diagonal matrix with $a_i$ on the diagonal, {\it i.e.} $\mathbf{A}_i\egaldef a_i\mathbf{Id}_{2Q}$, with $\|\sum_{i}\mathbf{A}_i^2\|=\sum_{i}a_i^2=\sigma^2$.\\
Finally, we can invoke Theorem \ref{theo:tropp} to obtain the concentration inequality \eqref{eq:ineg}.
\end{proof}

In order to make use of this corollary,  we rewrite confusion matrices as sums of example-based confusion matrices.
That is, for each example $(\xbf_i,y_i)\in S$, we define its empirical confusion matrix by $\mathbf{C}^f_i=(\hat{c}_{pq}(i))_{1\leq p,q \leq Q}$ as follows:
\begin{align*}
\forall p,q, \hat{c}_{pq}(i) \egaldef \left\{
\begin{array}{ll}
0 &\textrm{ if } q=p\\
\displaystyle\frac{1}{m_{y_i}} \I( f(\xbf)=q)\I(y_i=p) &\textrm{otherwise}.
\end{array}\right.
\end{align*}
where $m_{y_i}$ is the number of examples of class $y_i\in Y$ belonging to $S$.
Given an example $(\xbf_i,y_i)\in S$, the example-based confusion matrix contains at most one non zero-element when $f$ misclassifies $(\xbf_i,y_i)$.
In the same way, when $f$ correctly classifies $(\xbf_i,y_i)$ then the example-based confusion matrix is equal to $\mathbf{0}$.
Concretely, for every sample $S=\{(\xbf_i,y_i)\}_{i=1}^m$ and every $f\in\mathcal{F}$, our error measure is then $$\mathbf{C}_S^f=\sum_{i=1}^m \mathbf{C}_i^f.$$
It naturally appears that we penalize only when $f$ errs.

Moreover, e further introduce the random square matrices $\mathbf{C'}_i^f=(\hat{c}_{pq}'(i))_{1\leq p,q \leq Q}$ defined by:
\begin{align}
\label{eq:chatprime}
\forall p,q, \hat{c}_{pq}'(i) \egaldef \left\{
\begin{array}{ll}
0 &\textrm{ if } \hat{c}_{pq}(i)=0\\
\displaystyle\frac{1}{m_{y_i}} \left(\I( f(\xbf_i)=q)\I(y_i=p) - \E_{S \sim (D)^m} \frac{1}{m_{y}}\I( f(\xbf)=q)\I(y=p)\right)  &\textrm{otherwise}.
\end{array}\right.
\end{align}
The term $\E_{S\sim (D)^m} \frac{1}{m_{y}}\I( f(\xbf)=q)\I(y=p)$, when $\hat{c}_{pq}(i)\ne 0$, is equivalent to the expectation (according to $S\sim (D)^m$) of the elements $\hat{c}_{pq}$ of $\mathbf{C}_S^f$ , such that $p=y_i$ and $q=h(\xbf_i)$.
Equation \eqref{eq:chatprime} is then equivalent to:
\begin{align*}
%\label{eq:chatprime2}
\forall p,q, \hat{c}_{pq}'(i) \egaldef \left\{
\begin{array}{ll}
0 &\textrm{ if } \hat{c}_{pq}(i)=0\\
 \hat{c}_{pq}(i) - \frac{1}{m_{y_i}} \E_{S \sim (D)^m} \hat{c}_{pq} &\textrm{otherwise}.
\end{array}\right.
\end{align*}
For sake of clarity, given an example $(\xbf_i,y_i)$ and for every $S\sim (D)^m$, we denote $\mathbf{C}_{S|i}^f$ the matrix with at most one non-zero element of coordinates $(p,q)$ equals to $\hat{c}_{pq}$ with $p=y_i$ and $q=h(\xbf_i)$.
Then, we obtain the following definition of $\mathbf{C'}_i^f$:
\begin{equation}
\label{eq:sarm}
\mathbf{C'}_i^f = \mathbf{C}^f_i - \frac{1}{m_{y_i}} \E_{S\sim\mathfrak{D}_m}\mathbf{C}_{S|i}^f,
\end{equation}
which verify $\E_i \mathbf{C'}_i^f= 0$. 

We have yet to find a suitable $a_i$ for a given $\mathbf{C'}_i^f$.
Let $\lambda_{{\max}_{i}}$ be the maximum singular value of $\mathbf{C'}_i^f$.
%Then, it suffices to choose $\mathbf{a}_i$ as a diagonal matrix with $\lambda_{{\max}_{i}}$ on the diagonal.
It is easy to verified that $\lambda_{{\max}_{i}} \leq \textstyle\frac{1}{m_{y_i}}$. 
Thus, for all $i$ we fix $a_i$ equal to  $\textstyle\frac{1}{m_{y_i}}$. 
% on the diagonal, {\it i.e.} $\mathbf{A}_i = \textstyle\frac{1}{m_{y_i}}\mathbf{Id}_{Q}$.
%By applying dilation \eqref{eq:dilatation}, we obtain $\mathcal{S}(\mathbf{A}_i)=\textstyle\frac{1}{m_{y_i}}\mathbf{Id}_{2Q}$.
%For the dilation of $\mathbf{C'}_i^f$, we choose $\mathcal{S}(\mathbf{A}_i)=\textstyle\frac{1}{m_{y_i}}\mathbf{Id}_{2Q}$.

Finally, with the introduced notations, Corollary \ref{cor:tropp} leads to the following concentration inequality:
\begin{align}
\label{eq:ourineg}
\PR%_{f\sim\mathfrak{P}}
\left\{ \|\sum_{i=1}^m \mathbf{C'}_i^f\| \geq \epsilon \right\} \leq 2.Q.\exp\left(\frac{-\epsilon^2}{8\sigma^2}\right).
\end{align}
%where $\sigma^2\egaldef \|\sum_{i}\mathcal{S}(\mathbf{A}_i)^2\|$. % and $\mathcal{S}(\cdot)$ is the dilation defined in Equation \eqref{eq:dilatation}.

This inequality \eqref{eq:ourineg} allows us to demonstrate our Theorem \ref{theo:borne} by following the process of \cite{Mcallester03,Seeger02,Langford2005}.

\subsection{``Three Step Proof'' Of Our Bound}

First, thanks to concentration inequality \eqref{eq:ourineg}, we prove the following lemma.

\begin{lemma}
\label{lemme:bound}
Let $Q$ be the size of $\mathbf{C}_S^f$ and $\mathbf{C'}_i^f=\mathbf{C}^f_i - \frac{1}{m_{y_i}} \E_{S\sim\mathfrak{D}_m}\mathbf{C}_{S|i}^f$ defined as in \eqref{eq:sarm}. 
Then the following bound holds for any $\delta\in(0,1]$:
\begin{align*}
\nonumber \PR_{S\sim \mathfrak{D}_m}   \Bigg\{
\E_{f\sim \Pfrak}   \left[  \exp\left(\frac{1-8\sigma^2}{8\sigma^2} \|\sum_{i=1}^m \mathbf{C'}_i^f\|^2\right) \right]  \leq  \frac{2Q}{8\sigma^2\delta}  
\Bigg\} \geq 1-\delta
\end{align*}
\end{lemma}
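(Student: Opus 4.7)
The plan is to derive Lemma \ref{lemme:bound} from the matrix concentration inequality \eqref{eq:ourineg} in three moves, following the ``three step'' route of \cite{Mcallester03,Seeger02,Langford2005}: first, integrate the Gaussian-type tail bound (for a fixed $f\in\Fcal$) into an in-expectation bound on a well-chosen exponential moment of $\|\sum_i\mathbf{C'}_i^f\|^2$; second, swap $\E_{S\sim\Dfrak_m}$ and $\E_{f\sim\Pfrak}$ via Fubini/Tonelli; third, apply Markov's inequality to turn the in-expectation statement over $S$ into a high-probability one.

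For the first step, fix $f\in\Fcal$ and let $X_f := \|\sum_{i=1}^m\mathbf{C'}_i^f\|$, a nonnegative real random variable satisfying $\PR_S(X_f \geq \epsilon) \leq 2Q\exp(-\epsilon^2/(8\sigma^2))$ by \eqref{eq:ourineg}. Set $\lambda := (1-8\sigma^2)/(8\sigma^2) = 1/(8\sigma^2) - 1$, which is positive precisely when $8\sigma^2 < 1$ (a condition ensured downstream by $m_- > 8Q$, since $\sigma^2 \leq Q/m_-$). By the layer-cake identity and the substitution $t = e^{\lambda u^2}$,
\begin{align*}
\E_{S\sim\Dfrak_m}\bigl[\exp(\lambda X_f^2)\bigr] &= 1 + \int_0^\infty 2\lambda u\, e^{\lambda u^2}\,\PR_S(X_f \geq u)\,du\\
&\leq 1 + 4Q\lambda\int_0^\infty u\exp\bigl(-u^2(1/(8\sigma^2)-\lambda)\bigr)\,du.
\end{align*}
With the chosen $\lambda$, the coefficient $1/(8\sigma^2) - \lambda$ equals $1$, so the Gaussian integral evaluates to $1/2$, giving $\E_S[\exp(\lambda X_f^2)] \leq 1 + 2Q\lambda = 1 - 2Q + 2Q/(8\sigma^2) \leq 2Q/(8\sigma^2)$ for $Q \geq 1$.

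For the second step, since the integrand is nonnegative and the prior $\Pfrak$ is fixed (independent of $S$), Tonelli's theorem allows us to interchange the expectations, yielding
\begin{align*}
\E_{S\sim\Dfrak_m}\,\E_{f\sim\Pfrak}\bigl[\exp(\lambda X_f^2)\bigr] = \E_{f\sim\Pfrak}\,\E_{S\sim\Dfrak_m}\bigl[\exp(\lambda X_f^2)\bigr] \leq \frac{2Q}{8\sigma^2}.
\end{align*}
For the third step, apply Markov's inequality to the nonnegative random variable $Z(S) := \E_{f\sim\Pfrak}[\exp(\lambda X_f^2)]$: one has $\PR_S(Z \geq \E_S[Z]/\delta) \leq \delta$, and combining with the bound above on $\E_S[Z]$ produces the claimed statement.

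The main obstacle is the tail-to-exponential-moment conversion in the first step: the specific choice $\lambda = 1/(8\sigma^2) - 1$ must be made so that the leftover rate $1/(8\sigma^2) - \lambda$ in the Gaussian integral is exactly $1$, producing the clean constant $2Q/(8\sigma^2)$ that feeds into the final bound of Theorem \ref{theo:borne}. Once this calibration is pinned down, the Fubini exchange and Markov step are routine PAC-Bayes bookkeeping.
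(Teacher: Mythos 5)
Your proof is correct and follows essentially the same route as the paper: convert the tail bound \eqref{eq:ourineg} into a bound on $\E_{S}\bigl[\exp\bigl(\tfrac{1-8\sigma^2}{8\sigma^2}\|\sum_i\mathbf{C'}_i^f\|^2\bigr)\bigr]$, exchange $\E_S$ and $\E_{f\sim\Pfrak}$, and finish with Markov's inequality. The only (cosmetic) difference is that you use the integration-by-parts form of the layer-cake identity with the explicit calibration $\lambda=1/(8\sigma^2)-1$, whereas the paper integrates $\PR\{\exp(\cdot)\geq\nu\}$ in $\nu$ directly; both yield the same constant $2Q/(8\sigma^2)$ (yours is in fact marginally tighter before the final relaxation), and both implicitly require $8\sigma^2<1$, which you correctly flag.
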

\begin{proof}
For readability reasons, we note $\mathbf{C'}_S^f =  \sum_{i=1}^m \mathbf{C'}_i^f$.
If $Z$ is a real valued random variable  so that $\PR\left (Z\geq
  z\right)\leq k \exp(-n.g(z))$ with $g(z)$ non-negative, non-decreasing and $k$ a constant, then $\PR \left (\exp\left((n-1)g(Z) \right)\geq \nu\right) \leq \min(1, k \nu^{-n/(n-1)})$. 
We apply this to the concentration inequality \eqref{eq:ourineg}. % of Corollary \ref{cor:tropp}.
Choosing $g(z)=z^2$ (non-negative), $z=\epsilon$, $n=\frac{1}{8\sigma^2}$ and $k=2Q$, we obtain the following result:
%By applying the concentration inequality of Corollary \ref{cor:tropp}, we obtain the following result,
% this to equation \ref{eq:tropp} allows us to obtain the following result.
\begin{align*}
\PR%_{S\sim\Dfrak_m}%_{f\sim\mathfrak{P}}
\left\{ \exp\left(\frac{1-8\sigma^2}{8\sigma^2}\|\mathbf{C'}_S^f\| \right)\geq \nu \right\}
 \leq \min\big(1, 2Q\nu^{-1/(1-8\sigma^2)}\big).
\end{align*}

Note that $\exp\Big(\frac{1-8\sigma^2}{8\sigma^2}\|\mathbf{C'}_S^f\| \Big)$ is always non-negative. Hence it allows us to compute its expectation as:
\begin{align*}
\E%_{S\sim\Dfrak_m}%_{f\sim \mathfrak{P}}
\Bigg[\exp\Big(\frac{1-8\sigma^2}{8\sigma^2}\|\mathbf{C'}_S^f\| \Big)\Bigg] &= \int_0^\infty \PR%_{S\sim\Dfrak_m}%_{f\sim \mathfrak{P}}
\left\{ \exp\Big(\frac{1-8\sigma^2}{(8\sigma^2)}\|\mathbf{C'}_S^f\| \Big)\geq \nu \right\}d\nu\\
 &\leq  2Q + \int_1^\infty 2Q\nu^{-1/(1-8\sigma^2)}d\nu\\
 &= 2Q - 2Q \frac{1- 8\sigma^2}{8\sigma^2}\Big[\nu^{-8\sigma^2/(1-8\sigma^2)}\Big]_1^\infty\\
&= 2Q + 2Q \frac{1- 8\sigma^2}{8\sigma^2}\\
&=  \frac{2Q}{8\sigma^2}.
\end{align*}

For a given classifier $f\in\mathcal{F}$, we have:
\begin{equation}
\label{eq:premarkov}
\E_{S\sim\mathfrak{D}^m}\left[\exp\left(\frac{1-8\sigma^2}{8\sigma^2}\|\mathbf{C'}_S^f\| \right)\right] \leq \frac{2Q}{8\sigma^2}.
\end{equation}

Then, if $\mathfrak{P}$ is a probability distribution over $\mathcal{F}$, Equation \eqref{eq:premarkov} implies that:
\begin{equation}
\label{eq:premarkov2}
\E_{S\sim\mathfrak{D}^m}\left[\E_{f\sim\mathfrak{P}}\exp\left(\frac{1-8\sigma^2}{8\sigma^2}\|\mathbf{C'}_S^f\| \right)\right] \leq \frac{2Q}{8\sigma^2}.
\end{equation}

Using Markov's inequality\footnote{see Theorem \ref{theo:markov} in Appendix.}, we obtain the result of the lemma.
\end{proof}

The second step to prove Theorem \ref{theo:borne} is to use the shift given in \cite{Mcallester03}.
We recall this result in the following lemma.

\begin{lemma}[Donsker-Varadhan inequality \citet{Donsker75}]
Given the Kullback-Leibler divergence\footnote{The KL-divergence is defined in Equation \eqref{eq:KL}.} $KL(\mathfrak{Q}\|\mathfrak{P})$ between two distributions $\mathfrak{P}$ and $\mathfrak{Q}$ and let $g(\cdot)$ be a function, we have:
\label{lemme:shift}
$$\E_{a\sim \Qfrak}\Big[g(b)\Big] \leq KL(\mathfrak{Q}\|\mathfrak{P}) + \ln\E_{x\sim \Pfrak}\Big[ \exp(g(b))\Big].$$
\end{lemma}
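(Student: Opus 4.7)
The plan is to prove the Donsker--Varadhan inequality by the classical change-of-measure trick followed by a single application of Jensen's inequality. The right-hand side of the claim involves an expectation under $\Pfrak$, while the left-hand side is an expectation under $\Qfrak$; the natural move is therefore to introduce the density ratio $\Pfrak/\Qfrak$ inside the $\Pfrak$-expectation so that everything can be rewritten under $\Qfrak$, at which point concavity of $\ln$ will do the rest.

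Concretely, assuming $\Qfrak$ is absolutely continuous with respect to $\Pfrak$ (otherwise $KL(\Qfrak\|\Pfrak)=+\infty$ and the bound is vacuous), I would first write
\begin{align*}
\ln \E_{f\sim\Pfrak}\bigl[\exp(g(f))\bigr] \;=\; \ln \E_{f\sim\Qfrak}\!\left[\frac{\Pfrak(f)}{\Qfrak(f)}\,\exp(g(f))\right].
\end{align*}
Applying Jensen's inequality to the concave function $\ln$ then gives
\begin{align*}
\ln \E_{f\sim\Qfrak}\!\left[\frac{\Pfrak(f)}{\Qfrak(f)}\,e^{g(f)}\right] \;\geq\; \E_{f\sim\Qfrak}\!\left[\ln\frac{\Pfrak(f)}{\Qfrak(f)}\right] + \E_{f\sim\Qfrak}\bigl[g(f)\bigr] \;=\; -KL(\Qfrak\|\Pfrak) + \E_{f\sim\Qfrak}\bigl[g(f)\bigr],
\end{align*}
where the last equality uses the definition \eqref{eq:KL} of the KL-divergence. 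Chaining the two displays and rearranging yields exactly the stated inequality.

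There is no substantive obstacle: the proof is essentially two lines. The only things to watch are the absolute-continuity caveat noted above and the implicit requirement that $\E_{f\sim\Pfrak}[e^{g(f)}]$ be finite, which is the only regime in which the bound is non-trivial. One may further note, though it is not needed for the proof of this lemma, that the inequality is tight, being saturated by the tilted measure $d\Qfrak^{\star}/d\Pfrak \propto e^{g}$, which is precisely what turns this bound into the variational characterisation of the KL-divergence and explains why it is so useful as a shift inequality in PAC-Bayes analyses.
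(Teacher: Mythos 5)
Your proof is correct: it is the standard change-of-measure plus Jensen argument, which is exactly the argument behind the citation the paper gives in place of a proof (the paper's ``proof'' of this lemma is simply ``See McAllester (2003)''). The only cosmetic point is that the change-of-measure step is, in general, the inequality $\E_{f\sim\Pfrak}[e^{g(f)}]\geq \E_{f\sim\Qfrak}\bigl[\tfrac{\Pfrak(f)}{\Qfrak(f)}e^{g(f)}\bigr]$ rather than an equality when $\Pfrak$ is not absolutely continuous with respect to $\Qfrak$; since $e^{g}\geq 0$ this inequality points in the direction you need, so your conclusion stands unchanged, and together with the absolute-continuity caveat you already noted the argument is complete.
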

\begin{proof}See \cite{Mcallester03}.
\end{proof}
Recall that  $\mathbf{C'}_S^f=\sum_{i=1}^m\mathbf{C'}^f_i$.
With $g(b) = \frac{1-8\sigma^2}{8\sigma^2}b^2$ and $b=\|\mathbf{C'}_S^f\|$, Lemma \ref{lemme:shift} implies:
\begin{align}
\label{eq:shift}
 \E_{f\sim \mathfrak{Q}}\Bigg[\frac{1-8\sigma^2}{8\sigma^2}\|\mathbf{C'}_S^f\|^2\Bigg] 
 \leq   KL(\mathfrak{Q}\|\mathfrak{P})  +  \ln\E_{f\sim \mathfrak{P}} \Bigg[  \exp\left( \frac{1-8\sigma^2}{8\sigma^2}\|\mathbf{C'}_S^f\|^2 \right)  \Bigg].
\end{align}

The last step that completes the proof of Theorem \ref{theo:borne} consists in applying the result we obtained in Lemma \ref{lemme:bound} to  Equation \eqref{eq:shift}. 
Then, we have:
\begin{equation}
\label{eq:shift2}
\E_{f\sim \mathfrak{Q}}\left[\frac{1-8\sigma^2}{8\sigma^2}\|\mathbf{C'}_S^f\|^2\right] \leq KL(\mathfrak{Q}\|\mathfrak{P}) + \ln\frac{2Q}{8\sigma^2\delta}.
\end{equation}

Since  $g(\cdot)$ is clearly convex, we apply Jensen's
inequality\footnote{see Theorem \ref{theo:jensen} in Appendix.} to~\eqref{eq:shift2}. Then, with  probability at least $1-\delta$ over $S$, and for every distribution $\mathfrak{Q}$ on $\mathcal{F}$, we have:
\begin{equation}
\label{eq:shift3}
\Big(\E_{f\sim \mathfrak{Q}}\|\mathbf{C'}_S^f\|\Big)^2   \leq  \frac{8\sigma^2}{1 - 8\sigma^2}   \left( KL(\mathfrak{Q}\|\mathfrak{P})  +  \ln\frac{2Q}{8\sigma^2\delta} \right) .
\end{equation}
Since $\mathbf{C'}_S^f=\sum_{i=1}^m\left[\mathbf{C}^f_i - \E_{S\sim\mathfrak{D}_m}\mathbf{C}_i^f\right]$, then the bound \eqref{eq:shift3} is quite similar to the one given in Theorem \ref{theo:borne}.

We present in the next section, the calculations leading to our PAC-Bayesian generalization bound.

\subsection{Simplification}
We first compute the variance parameter:
 $$\sigma^2= \sum_{i=1}^ma_i^2.$$
For that purpose, in Section \ref{sec:ineg} we showed that for each $i\in\{1,\dots,m\}$,  we can choose $a_i=\frac{1}{m_{y_i}}$, where $y_i$ is the class of the $i$-th example and $m_{y_i}$ is the number of examples of class $y_i$.
Thus we have:
%the matrix $\mathcal{S}(\mathbf{A}_i)$  as a diagonal matrix with all the diagonal elements equal to $\frac{1}{m_{y_i}}$,  where $y_i$ is the class of the $i$-th example and $m_{y_i}$ is the number of examples of class $y_i$.
%Thus we have $\mathcal{S}(\mathbf{A}_i)^2 = \frac{1}{m_{y_i}^2}\mathbf{Id}_{2Q}$, {\it i.e.} $\mathcal{S}(\mathbf{A}_i)^2$ is a diagonal matrix and all its diagonal element are equal to $\textstyle\frac{1}{m_{y_i}^2}$.\\
%More importantly, the sum-matrix $\sum_{i=1}^m \mathcal{S}(\mathbf{A}_i)^2$ is also a diagonal matrix and its diagonal elements are:
\begin{equation*}
\sigma^2 = \sum_{i=1}^m\frac{1}{m_{y_i}^2} = \sum_{y=1}^Q\sum_{i:y_i=y}\frac{1}{m_y^2}  = \sum_{y=1}^Q\frac{1}{m_y}.
\end{equation*}
%Then the maximum singular value of $\sum_{i=1}^m \mathcal{S}(\mathbf{A}_i)^2$ is equal to $\sum_{y=1}^Q\frac{1}{m_y}$.
%Thus, the spectral norm of the sum corresponds  to $\|\sum_{i=1}^m \mathcal{S}(\mathbf{A}_i)^2\|=\sum_{y=1}^Q\frac{1}{m_y}=\sigma^2$.

For sake of simplification of Equation \eqref{eq:shift3} and since the term on the right side of this equation is an increasing function with respect to $\sigma^2$,  we propose to upper-bound  $\sigma^2$:
\begin{equation}
\label{eq:bornesigma}
 \sigma^2 = \sum_{y=1}^Q\frac{1}{m_y} \leq \frac{Q}{\min_{y=1,\dots,Q} m_y}.
\end{equation}
Let $m_{-} \egaldef \min_{y=1,\dots,Q} m_y$, then using Equation \eqref{eq:bornesigma}, we obtain the following bound from Equation \eqref{eq:shift3}:
\begin{align*}
\Big(\E_{f\sim \mathfrak{Q}}[\|\mathbf{C'}_S^f\|]\Big)^2 & \leq  \frac{8Q}{m_{-}-8Q} \Big(KL(\mathfrak{Q}\|\mathfrak{P}) + \ln\frac{m_{-}}{4\delta}\Big).
\end{align*}
Then:
\begin{align}
\label{eq:borneCprime}
\E_{f\sim \mathfrak{Q}}[\|\mathbf{C'}_S^f\|] & \leq  \sqrt{\frac{8Q}{m_{-}-8Q} \Big(KL(\mathfrak{Q}\|\mathfrak{P}) + \ln\frac{m_{-}}{4\delta}\Big)}.
\end{align}

It remains to replace $\mathbf{C'}_S^f =  \sum_{i=1}^m\left[\mathbf{C}^f_i - \E_{S\sim\mathfrak{D}_m}\mathbf{C}_i^f\right]$.
Recall that $\mathbf{C}^{G_{\mathfrak{Q}}}  =  \E_{f\sim\mathfrak{Q}} \E_{S\sim\mathfrak{D}_m} \mathbf{C}_S^f$ and $\mathbf{C}_S^{G_{\mathfrak{Q}}}  =  \E_{f\sim\mathfrak{Q}}  \mathbf{C}_S^f$, we obtain:

\begin{align}
\label{eq:Cprime}
\nonumber\E_{f\sim \mathfrak{Q}}[\|\mathbf{C'}_S^f\|] &
= \E_{f\sim \mathfrak{Q}}\left[\|\sum_{i=1}^m\left[\mathbf{C}_i^f - \frac{1}{m_{y_i}} \E_{S\sim\mathfrak{D}_m}\mathbf{C}_{S|i}^f\right]\|\right]\\
\nonumber&= \E_{f\sim \mathfrak{Q}}\left[\|\sum_{i=1}^m\left[\mathbf{C}_i^f\right] - \sum_{i=1}^m\left[
\frac{1}{m_{y_i}} \E_{S\sim\mathfrak{D}_m}\mathbf{C}_{S|i}^f\right]\|\right]\\
\nonumber&= \E_{f\sim \mathfrak{Q}}\left[\| \mathbf{C}_S^f - \E_{S\sim\mathfrak{D}_m} \mathbf{C}_S^f\|\right]\\
\nonumber &\geq \| \E_{f\sim \mathfrak{Q}}\left[ \mathbf{C}_S^f - \E_{S\sim\mathfrak{D}_m} \mathbf{C}_S^f\right]\|\\
\nonumber &=  \|  \E_{f\sim \mathfrak{Q}} \mathbf{C}_S^f - \E_{f\sim \mathfrak{Q}}\E_{S\sim\mathfrak{D}_m} \mathbf{C}_S^f\|\\
&= \|   \mathbf{C}_S^{G_{\mathfrak{Q}}} - \mathbf{C}^{G_{\mathfrak{Q}}}\|.
\end{align}

By substituting the left part of the inequality \eqref{eq:borneCprime} with the term \eqref{eq:Cprime}, we  find the bound of our Theorem \ref{theo:borne}.

%%% Local Variables: 
%%% mode: latex
%%% TeX-master: "main_multibayes"
%%% End: 

%\input{preuvebayes}

\section{Discussion and Future Work}
\label{sec:discussion}
This work gives rise to many interesting questions, among which the following ones.

%% In the case of the classical binary PAC-Bayes framework, it is easy to show that the true error of the Bayes classifier \eqref{eq:bayesclassifier} and the one of the Gibbs classifier \eqref{eq:gibbsclassifier} are related by the following inequality:
%% \begin{align*}
%% R(B_{\mathfrak{Q}}) \leq 2R({G_\mathfrak{Q}}).
%% \end{align*}
%% One may notice that we do not immediately have a similar result for our confusion matrice setting. % we use in this paper.
%% This question is out of the scope of the present paper and the proof of such a relation between the confusion matrix-based errors of the Bayes classifier and of the Gibbs classifier for this framework is left for future work.

Some perspectives will be focused on instantiating our bound given in
Theorem \ref{theo:borne} for specific multi-class frameworks, such as
multi-class SVM \cite{Weston98,Crammer2002,Lee04} and multi-class
boosting (AdaBoost.MH/AdaBoost.MR \cite{Schapire2000}, SAMME
\cite{Zhu09}, AdaBoost.MM \cite{Mukherjee2010}).  Taking advantage of
our theorem while using the confusion matrices, may allow us to derive
new generalization bounds for these methods.

Additionally, we are interested in seeing how effective learning methods may be 
derived from the risk bound we propose.  For instance, in the
binary PAC-Bayes setting, the algorithm MinCq proposed by \citet{MinCq}
minimizes a bound depending on the first two moments of the margin of
the $\mathcal{Q}$-weighted majority vote.  From our Theorem
\ref{theo:borne} and with a similar study, we would like to design a
new multi-class learning algorithm and observe how sound such an algorithm 
could be. This would probably require the derivation of a Cantelli-Tchebycheff deviation inequality
in the matrix case.

Besides, it might be very interesting to see how the noncommutative/matrix concentration
inequalities provided by~\citet{Tropp2011} might be of some use for other kinds of learning
problem such as multi-label classification, label ranking problems or structured prediction issues.

Finally, the question of extending the present work to the analysis of algorithms learning (possibly infinite-dimensional) 
operators as \cite{abernethy09newapproach} is also very exciting.

%%% Local Variables: 
%%% mode: latex
%%% TeX-master: "main_multibayes"
%%% End: 

\section{Conclusion}
\label{sec:conclusion}
In this paper, we propose a new PAC-Bayesian generalization bound that
applies in the multi-class classification setting.  The originality of
our contribution is that we consider the confusion matrix as an error
measure. Coupled with the use of the operator norm on matrices, we are
capable of providing generalization bound on the `size' of confusion
matrix (with the idea that the smaller the norm of the confusion
matrix of the learned classifier, the better it is for the
classification task at hand).  The derivation of our result takes
advantage of the concentration inequality proposed by
\citet{Tropp2011} for the sum of random self-adjoint matrices,  that
we directly adapt to square matrices which are not self-adjoint.

The main results are presented in Theorem \ref{theo:borne} and
Corollary \ref{cor:borne}.  The bound in Theorem \ref{theo:borne} is
given on the difference between the true risk of the Gibbs classifier
and its empirical error.  While the one given in Corollary
\ref{cor:borne} upper-bounds the risk of the Gibbs classifier by its
empirical error.
Moreover we have bound the risk of the Bayes classifier by the one of the Gibbs classifier.

An interesting point is that our bound depends on the minimal quantity $m_-$
of training examples belonging to the same class, for a given number
of classes.  If this value increases, {\it i.e.} if we have a lot of
training examples, then the empirical confusion matrix of the Gibbs
classifier tends to be close to its true confusion matrix. A point worth noting is that 
the bound varies as $O(1/\sqrt{m_-})$, which is a typical rate in bounds not using second-order information.

The present work gives rise to a few algorithmic and theoretical questions that we have discussed in the previous section.

\section*{Appendix}

\begin{theorem}[Concentration Inequality for Random Matrices \citet{Tropp2011}]
\label{theo:tropp}
Consider a finite sequence $\{\mathbf{M}_i\}$ of independent, random, self-adjoint matrices with dimension $Q$, and let $\{\mathbf{A}_i\}$ be a sequence of fixed self-adjoint matrices.
Assume that each random matrix satisfies $\E \mathbf{M}_i=\mathbf{0}$ and $\mathbf{M}_i^2\leqo \mathbf{A}_i^2$ almost surely.
Then, for all $\epsilon\geq 0$,
\begin{align*}
\PR\left\{ \lambda_{\max} \Big(\sum_i \mathbf{M}_i\Big) \geq \epsilon \right\} \leq Q.\exp\left(\frac{-\epsilon^2}{8\sigma^2}\right),
\end{align*}
where $\sigma^2\egaldef \|\sum_{i}\mathbf{A}_i^2\|$ and $\leqo$ refers to the semidefinite order on self-adjoint matrices.
\end{theorem}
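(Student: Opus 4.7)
The plan is to follow the \emph{matrix Laplace transform method} developed by Ahlswede--Winter and refined by \citet{Tropp2011}: it combines a matrix Chernoff step, Lieb's concavity theorem to decouple the independent summands, and a Hoeffding-type bound on the matrix cumulant generating function. First I would open with the matrix Markov bound, valid for any $\theta > 0$:
\begin{align*}
\PR\!\left\{\lambda_{\max}\!\Big(\sum_i \mathbf{M}_i\Big) \geq \epsilon\right\} \;\leq\; e^{-\theta\epsilon}\, \E\, \operatorname{tr}\, \exp\!\Big(\theta \sum_i \mathbf{M}_i\Big),
\end{align*}
which follows from Markov's inequality applied to $e^{\theta \lambda_{\max}(\cdot)}$ together with $e^{\lambda_{\max}(X)} = \lambda_{\max}(e^X) \leq \operatorname{tr}(e^X)$ for any self-adjoint $X$.

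The central step is to bound $\E\, \operatorname{tr}\, \exp(\theta \sum_i \mathbf{M}_i)$ by an expression in which the randomness has been pushed down to the individual summands. For this I would invoke Lieb's concavity theorem: for every fixed self-adjoint $H$, the functional $A \mapsto \operatorname{tr}\, \exp(H + \log A)$ is concave on the positive-definite cone. Plugging $A = e^{\theta \mathbf{M}_k}$ and applying Jensen's inequality conditionally on $\mathbf{M}_1,\dots,\mathbf{M}_{k-1}$, then iterating over $k$ using independence, produces the Laplace master inequality
\begin{align*}
\E\, \operatorname{tr}\, \exp\!\Big(\theta \sum_i \mathbf{M}_i\Big) \;\leq\; \operatorname{tr}\, \exp\!\Big(\sum_i \log \E\, e^{\theta \mathbf{M}_i}\Big).
\end{align*}

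Next I would establish a matrix Hoeffding cgf bound: under $\E \mathbf{M}_i = \mathbf{0}$ and $\mathbf{M}_i^2 \leqo \mathbf{A}_i^2$, there is a scalar function $g$ of order $\theta^2$ such that $\log \E\, e^{\theta \mathbf{M}_i} \leqo g(\theta)\, \mathbf{A}_i^2$ in the semidefinite order. The route is to first prove the pointwise operator inequality
\begin{align*}
e^{\theta \mathbf{M}_i} \;\leqo\; \mathbf{I} + \theta \mathbf{M}_i + g(\theta)\, \mathbf{M}_i^2
\end{align*}
by a Taylor-remainder argument applied in the eigenbasis of $\mathbf{M}_i$ (combined with operator convexity of $x \mapsto x^2$), then take expectations to kill the linear term, and finally absorb the hypothesis $\mathbf{M}_i^2 \leqo \mathbf{A}_i^2$. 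Summing these semidefinite bounds and using Klein's inequality (monotonicity of $\operatorname{tr}\, \exp$ with respect to $\leqo$) together with $\operatorname{tr}\, e^B \leq Q\, e^{\lambda_{\max}(B)}$ yields
\begin{align*}
\operatorname{tr}\, \exp\!\Big(\sum_i \log \E\, e^{\theta \mathbf{M}_i}\Big) \;\leq\; Q\, \exp\!\Big(g(\theta)\, \big\|\sum_i \mathbf{A}_i^2\big\|\Big) \;=\; Q\, \exp\!\big(g(\theta)\, \sigma^2\big).
\end{align*}

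Combining the three pieces gives the Chernoff bound $Q\, \exp(g(\theta)\sigma^2 - \theta\epsilon)$, which is then optimized over $\theta > 0$: with Tropp's choice $g(\theta) = 2\theta^2$, the minimum at $\theta = \epsilon/(4\sigma^2)$ reproduces exactly the claimed tail $Q\, \exp(-\epsilon^2/(8\sigma^2))$. The main obstacle is not the bookkeeping but the two non-elementary ingredients on which everything rests: Lieb's concavity theorem is a deep result from quantum statistical mechanics whose proof sits outside elementary matrix analysis, and the matrix cgf bound cannot use the pointwise manipulations of a bounded scalar that drive the classical Hoeffding lemma, since $\mathbf{M}_i$ and $\mathbf{A}_i$ need not commute; operator convexity of $x \mapsto x^2$ and careful handling of the semidefinite order have to do the work instead.
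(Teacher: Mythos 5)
First, note that the paper does not actually prove Theorem \ref{theo:tropp}: it is imported verbatim from \citet{Tropp2011} and used as a black box, so there is no in-paper argument to compare yours against. Judged as a reconstruction of Tropp's own proof, your outline has the right architecture --- the matrix Chernoff step via the trace exponential, the subadditivity of matrix cumulant generating functions obtained from Lieb's concavity theorem and an iterated conditional Jensen step, a semidefinite cgf bound $\log \E\, e^{\theta \mathbf{M}_i} \leqo g(\theta)\mathbf{A}_i^2$, Klein's inequality together with $\operatorname{tr} e^{\mathbf{B}} \leq Q\, e^{\lambda_{\max}(\mathbf{B})}$, and optimization at $\theta = \epsilon/(4\sigma^2)$ with $g(\theta) = 2\theta^2$, which does reproduce the exponent $-\epsilon^2/(8\sigma^2)$.

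The one step that does not survive scrutiny is your route to the cgf bound. The pointwise operator inequality $e^{\theta \mathbf{M}_i} \leqo \mathbf{I} + \theta\mathbf{M}_i + 2\theta^2\mathbf{M}_i^2$ is false in general: already in the scalar case $e^{s} \leq 1 + s + 2s^2$ fails for $s \gtrsim 3.4$, and the Hoeffding hypothesis $\mathbf{M}_i^2 \leqo \mathbf{A}_i^2$ provides no uniform control on $\theta\lambda_{\max}(\mathbf{M}_i)$. A Taylor-remainder device of that shape is the Bernstein route; it requires an a priori bound $\lambda_{\max}(\theta\mathbf{M}_i) \leq R$, and the resulting $g$ then depends on $R$ rather than being the universal quadratic you need. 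Tropp obtains the Hoeffding cgf bound differently: he first treats a Rademacher series, using the spectral inequality $\E_{\varepsilon}\, e^{\theta\varepsilon\mathbf{A}} = \cosh(\theta\mathbf{A}) \leqo e^{\theta^2\mathbf{A}^2/2}$ (a consequence of the scalar bound $\cosh(s) \leq e^{s^2/2}$ applied in the eigenbasis), and then reduces the general zero-mean case to it by a symmetrization step; the symmetrization effectively doubles $\theta$, which is exactly where the factor $4$ --- hence the $8$ rather than $2$ in the denominator of the exponent --- comes from. If you replace your Taylor-remainder lemma by symmetrization plus the hyperbolic-cosine bound, the rest of your argument goes through as written.
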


\begin{theorem}[Markov's inequality]
\label{theo:markov}
Let $Z$ be a random variable and $z\geq 0$,  then:
$$
\PR{(|Z|\geq z)} \leq \frac{\E(|Z|)}{z}.
$$
\end{theorem}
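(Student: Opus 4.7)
The plan is to prove Markov's inequality by the standard pointwise-domination argument, which reduces the probabilistic statement to a monotonicity property of expectation. The key observation is that for any realization $\omega$, the value $|Z(\omega)|$ dominates $z \cdot \I(|Z(\omega)|\geq z)$: either $|Z(\omega)|<z$, in which case the indicator vanishes and the bound is trivial, or $|Z(\omega)|\geq z$, in which case $|Z(\omega)|\geq z = z\cdot 1$. So the first step would be to record the pointwise inequality
\begin{equation*}
|Z| \;\geq\; z\,\I(|Z|\geq z).
\end{equation*}

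Next, I would take expectations on both sides. By monotonicity of expectation, this yields $\E(|Z|)\geq z\,\E\bigl[\I(|Z|\geq z)\bigr]$, and by the defining property of the indicator, $\E\bigl[\I(|Z|\geq z)\bigr]=\PR(|Z|\geq z)$. Putting these together gives
\begin{equation*}
\E(|Z|) \;\geq\; z\,\PR(|Z|\geq z).
\end{equation*}
To obtain the stated form of the inequality, I would split into cases on $z$. For $z>0$, divide both sides by $z$ to conclude $\PR(|Z|\geq z)\leq \E(|Z|)/z$. For the boundary case $z=0$, the inequality $\PR(|Z|\geq 0)\leq \E(|Z|)/0$ is interpreted with the right-hand side equal to $+\infty$ whenever $\E(|Z|)\geq 0$, so the claim holds trivially (and in fact $\PR(|Z|\geq 0)=1$).

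There is really no main obstacle here: the proof is two lines, and the only mild subtlety is the degenerate case $z=0$, which requires either the convention $\E(|Z|)/0 = +\infty$ or a note that the statement is vacuous there. No integrability assumption needs to be imposed explicitly, because if $\E(|Z|)=+\infty$ the bound is trivially true, and if $\E(|Z|)<\infty$ the argument above applies verbatim. Since the rest of the paper invokes Markov's inequality in a setting where the relevant nonnegative random variable has a finite expectation bounded by $2Q/(8\sigma^2)$ (cf.\ Equation~\eqref{eq:premarkov2}), this standard form is exactly what is needed.
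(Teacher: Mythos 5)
Your proof is correct: the pointwise domination $|Z|\geq z\,\I(|Z|\geq z)$ followed by taking expectations and dividing by $z>0$ is the canonical argument, and your handling of the degenerate case $z=0$ is a reasonable extra care. The paper itself states Markov's inequality in the appendix as a standard auxiliary fact without providing any proof, so there is nothing to compare against; your argument is exactly the standard one that would be expected.
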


\begin{theorem}[Jensen's inequality]
\label{theo:jensen}
Let $X$ be an integrable real-valued random variable and $g(\cdot)$ be a convex function, then:
$$
f(\E[Z]) \leq \E[g(Z)].
$$
\end{theorem}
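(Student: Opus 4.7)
The plan is to invoke the supporting hyperplane (subgradient) characterization of convex functions. Specifically, for any convex $g$ defined on an interval containing the essential range of $Z$, and for any point $\mu$ in the interior of that interval, there exists a scalar $c \in \R$ such that
\[
g(x) \geq g(\mu) + c(x-\mu) \quad \text{for all } x \text{ in the domain.}
\]
This is essentially the only structural fact about convexity that the proof needs; it follows from the existence of the one-sided derivatives of a convex function at any interior point and the fact that any value between them serves as a subgradient.

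Given this ingredient, I would first set $\mu \egaldef \E[Z]$, which is finite by the integrability hypothesis on $Z$. One may assume $\mu$ lies in the interior of the convex hull of the support of $Z$; otherwise $Z = \mu$ almost surely and the conclusion is a trivial equality. Choosing a subgradient $c$ of $g$ at $\mu$, the subgradient inequality applied pointwise gives $g(Z) \geq g(\mu) + c(Z - \mu)$ almost surely. Taking expectations on both sides and using linearity then yields
\[
\E[g(Z)] \;\geq\; g(\mu) + c\,\E[Z-\mu] \;=\; g(\mu) \;=\; g(\E[Z]),
\]
which is exactly the inequality claimed (after correcting the apparent typo $f \to g$ in the stated conclusion, since only $g$ is introduced as the convex function).

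The main obstacle, such as it is, lies not in the algebra but in the routine measure-theoretic housekeeping: one must justify that $g(Z)$ has a well-defined expectation and that linearity applies. Since $g$ is convex and finite on an interval, it is bounded below on any bounded set, so the negative part of $g(Z)$ is integrable under mild assumptions; if the positive part fails to be integrable, then $\E[g(Z)] = +\infty$ and the inequality holds trivially. Otherwise the argument proceeds cleanly, and no further technical difficulty arises.
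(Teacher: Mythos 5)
Your argument is the standard and correct proof of Jensen's inequality via the supporting-line (subgradient) characterization of convexity, and the edge cases (the degenerate case $Z=\mu$ a.s., integrability of the negative part of $g(Z)$, and the trivial case $\E[g(Z)]=+\infty$) are handled properly. The paper itself states this result in the appendix as a classical fact and offers no proof, so there is nothing to compare against; your write-up simply supplies the canonical textbook argument, and you are right that the statement contains typographical slips ($f$ for $g$, and $X$ for $Z$) that should be corrected.
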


\subsection*{Proofs of Proposition \ref{prop:borneriskbayes} and its Corollary \ref{cor:bornebayes}}
\label{sec:preuvebayes}

This section gives the formal proofs of  Proposition \ref{prop:borneriskbayes} and its Corollary \ref{cor:bornebayes}.

\subsubsection*{Proof of Proposition \ref{prop:borneriskbayes}}

Consider a labeled pair $(\xbf,y)$. Let us introduce the notation
$\gamma_q(\xbf)$ for $q\in Y$ such that:
$$\gamma_q(\xbf) =
\E_{f\sim\posterior}\I(f(\xbf)=q)=\sum_{f:f(\xbf)=q}\posterior(q).$$
Obviously,
$$\sum_{q\in Y}\gamma_q(\xbf)=1.$$
Recall that %(to be introduced elsewhere -- the mathematical
%definitions are already introduced but should give them names), 
the conditional Gibbs risk $R(G_{\posterior},p,q)$ and Bayes risk
$R(G_{\posterior},p,q)$ are defined as:
\begin{align}
R(G_{\posterior},p,q)& = \expectation_{\xbf\sim
  D_{|y=p}}\expectation_{f\sim\posterior}\I(f(\xbf)=q)=\expectation_{\xbf\sim
  D_{|y=p}}\gamma_q(\xbf),\tag{\ref{eq:x_gibbs}}\\
R(B_{\posterior},p,q)&=\expectation_{\xbf\sim D_{|y=p}}\I(\argmax_{c\in
Y}\gamma_c(\xbf)=q)\tag{\ref{eq:x_bayes}}
\end{align}
The former is the $(p,q)$ entry of $\Cbf^{G_{\posterior}}$ (if $p\neq
q$) and the latter is the $(p,q)$ entry of $\Cbf^{B_{\posterior}}$.

For $q\neq y$ to be predicted by the majority vote classifier, it is
necessary and sufficient that
$$\gamma_q(\xbf)\geq \gamma_c(\xbf),\;\forall c\in Y,c\neq q.$$
This might be equivalently rewritten as:
\begin{equation}
\I(\argmax_c\gamma_c(\xbf)=q)=\I(\operatorname{\wedge}_{c,c\neq
  q}\gamma_q(\xbf)\geq\gamma_c(\xbf))
\label{eq:bayes_errs}
\end{equation}
(note that the expectation of the left-hand side which respect to $D_{|y=p}$ is
$R(B_{\posterior},p,q)$ ---cf.~\eqref{eq:x_bayes}). Now remark that:
\begin{align*}
\I(\operatorname{\wedge}_{c,c\neq
  q}\gamma_q(\xbf)\geq\gamma_c(\xbf))=1&\Leftrightarrow
\gamma_q(\xbf)-\gamma_c(\xbf),\;\forall c\in Y,c\neq q\\
&\Rightarrow\sum_{c\in Y,c\neq
  q}(\gamma_q(\xbf)-\gamma_c(\xbf))\geq 0\\
&\Leftrightarrow\sum_{c\in Y,c\neq q}\gamma_q(\xbf)-\sum_{c\in Y,c\neq
  q}\gamma_c(\xbf)\geq 0\\
&\Leftrightarrow (Q-1)\gamma_q(\xbf) - (1-\gamma_q(\xbf))\geq 0\\
&\Leftrightarrow \gamma_q(\xbf)\geq \frac{1}{Q}.
\end{align*}
where we have used $\sum_{c\in Y}\gamma_c(\xbf)=1$ in the next to last
line. This means that:
$$\I(\operatorname{\wedge}_{c,c\neq
  q}\gamma_q(\xbf)\geq\gamma_c(\xbf))=1\Rightarrow \indicator{\gamma_q(\xbf)\geq \frac{1}{Q}}=1,$$
from which we get: 
$$\I(\operatorname{\wedge}_{c,c\neq
  q}\gamma_q(\xbf)\geq\gamma_c(\xbf))\leq \indicator{\gamma_q(\xbf)\geq
\frac{1}{Q}},$$
that is, by virtue of~\eqref{eq:bayes_errs}:
$$\indicator{\argmax_c\gamma_c(\xbf)=q} \leq \indicator{\gamma_q(\xbf)\geq
\frac{1}{Q}}.$$
We then may use that $\gamma\geq \theta\indicator{\gamma\geq 1/Q},\forall\gamma\in[0,1],\theta\in[0,1]$, as
illustrated on Figure~\ref{fig:step}, to obtain
$$\frac{1}{Q}\indicator{\gamma_q(\xbf)\geq
\frac{1}{Q}}\leq \gamma_q(\xbf)\Leftrightarrow \indicator{\gamma_q(\xbf)\geq
\frac{1}{Q}}\leq Q\gamma_q(\xbf),$$
and, combining with the previous inequality:
$$\indicator{\argmax_c\gamma_c(\xbf)=q}\leq Q\gamma_q(\xbf).$$
\begin{figure}[tb]
\begin{center}
\includegraphics[width=0.5\textwidth]{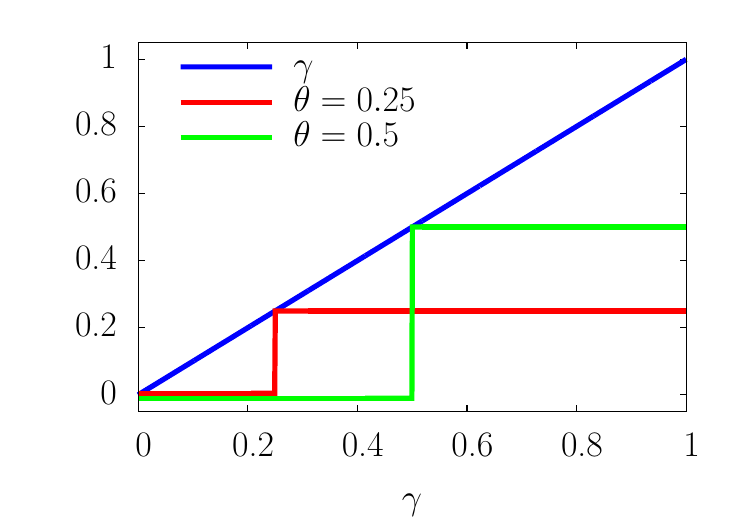}
\end{center}
\caption{Plot of $\gamma\mapsto
  \theta\indicator{\gamma\geq
    \theta},$ for $\theta=0.25$ (red) and $\theta=0.5$ (green). 
  Observe that $\gamma\geq \theta\indicator{\gamma\geq
    \theta}$, $\forall\theta\in[0,1]$.\label{fig:step}}
\end{figure}
Taking the expectation of both sides with respect to $\xbf\sim
D_{|y=p}$, we get:
$$R(B_{\posterior},p,q)\leq Q R(G_{\posterior},p,q).$$

\subsubsection*{Proof of Corollary \ref{cor:bornebayes}}

From the definitions of $R(G_{\posterior},p,q)$ \eqref{eq:x_gibbs} and  $R(B_{\posterior},p,q)$ \eqref{eq:x_bayes}, we directly obtain from Proposition \ref{prop:borneriskbayes}:
\begin{align}
\label{eq:cor1}
\Cbf^{B_{\Qfrak}} \leq Q\Cbf^{G_{\Qfrak}}.
\end{align}
We dilate  $\Cbf^{B_{\Qfrak}}$ and $Q\Cbf^{G_{\Qfrak}}$, then \eqref{eq:cor1} is rewritten as:
\begin{align*}
\Scal(\Cbf^{B_{\Qfrak}}) \leq \Scal(Q\Cbf^{G_{\Qfrak}}).
\end{align*}
Since all component of a confusion matrix are positive, we have $0\leq \Scal(\Cbf^{B_{\Qfrak}}) \leq \Scal(Q\Cbf^{G_{\Qfrak}})$.
We can thus apply the property \eqref{eq:sp}. We obtain:
\begin{align}
\label{eq:cor3}
\lambda_{\max}(\Scal(\Cbf^{B_{\Qfrak}})) \leq \lambda_{\max}(\Scal(Q\Cbf^{G_{\Qfrak}})).
\end{align}
Then, with property \eqref{eq:preserve}, \eqref{eq:cor3} is rewritten as:
\begin{align*}
\|\Cbf^{B_{\Qfrak}}\| \leq \|Q\Cbf^{G_{\Qfrak}}\|.
\end{align*}
Finally, by application of \eqref{eq:scalar}:
\begin{align*}
\|\Cbf^{B_{\Qfrak}}\| \leq Q\|\Cbf^{G_{\Qfrak}}\|.
\end{align*}

\bibliography{biblio}
\bibliographystyle{apalike}

\end{document}